\theoremstyle{plain}
\newtheorem{theorem}{Theorem}[section]
\theoremstyle{definition}
\theoremstyle{remark}
\begin{document}

\title{MPQ-DMv2: Flexible Residual Mixed Precision Quantization for Low-Bit Diffusion Models with Temporal Distillation}
\author{Weilun Feng, Chuanguang Yang, Haotong Qin, Yuqi Li, Xiangqi Li, Zhulin An, Libo Huang, Boyu Diao, Fuzhen Zhuang,~\IEEEmembership{Member,~IEEE}, Michele Magno,~\IEEEmembership{Senior Member,~IEEE}, Yongjun Xu, Yingli Tian,~\IEEEmembership{Fellow,~IEEE} and Tingwen Huang,~\IEEEmembership{Fellow,~IEEE}
\thanks{Weilun Feng and Xiangqi Li are with the Institute of Computing Technology, Chinese Academy of Sciences, Beijing 100190, China, and also with University of Chinese Academy of Sciences, Beijing 100049, China (e-mail: fengweilun24s@ict.ac.cn; lixiangqi24@mails.ucas.ac.cn).}
\thanks{Chuanguang Yang, Yuqi Li, Zhulin An, Libo Huang, Boyu Diao, and Yongjun Xu are with the Institute of Computing Technology, Chinese Academy of Sciences, Beijing 100190, China (e-mail: yangchuanguang@ict.ac.cn; yuqili010602@gmail.com; anzhulin@ict.ac.cn; www.huanglibo@gmail.com; diaoboyu2012@ict.ac.cn; xyj@ict.ac.cn).}
\thanks{Haotong Qin and Michele Magno are with ETH Zurich, Gloriastrasse 35, 8092 Zürich, Switzerland (e-mail: haotong.qin@pbl.ee.ethz.ch; michele.magno@pbl.ee.ethz.ch).}
\thanks{Fuzhen Zhuang is with Institute of Artificial Intelligence, Beihang University, Beijing, China and Zhongguancun Laboratory, Beijing, China (e-mail: zhuangfuzhen@buaa.edu.cn).}
\thanks{Yingli Tian is with the Department of Electrical Engineering, The City College, and the Department of Computer Science, the Graduate Center, the City University of New York, New York, NY, 10031 (e-mail: ytian@ccny.cuny.edu).}
\thanks{Tingwen Huang is with the School of Computer Science and Control Engineering, Shenzhen University of Advanced Technology, Shenzhen 518107, China (e-mail: huangtingwen@suat-sz.edu.cn).}
\thanks{Corresponding authors: Zhulin An and Chuanguang Yang.}}

\markboth{Journal of \LaTeX\ Class Files,~Vol.~14, No.~8, August~2021}%
{Shell \MakeLowercase{\textit{et al.}}: A Sample Article Using IEEEtran.cls for IEEE Journals}


\maketitle

\begin{abstract}

Diffusion models have demonstrated remarkable performance on vision generation tasks. However, the high computational complexity hinders its wide application on edge devices. Quantization has emerged as a promising technique for inference acceleration and memory reduction. However, existing quantization methods do not generalize well under extremely low-bit (2-4 bit) quantization. Directly applying these methods will cause severe performance degradation. We identify that the existing quantization framework suffers from the outlier-unfriendly quantizer design, suboptimal initialization, and optimization strategy. We present MPQ-DMv2, an improved \textbf{M}ixed \textbf{P}recision \textbf{Q}uantization framework for extremely low-bit \textbf{D}iffusion \textbf{M}odels. For the quantization perspective, the imbalanced distribution caused by salient outliers is quantization-unfriendly for uniform quantizer. We propose \textit{Flexible Z-Order Residual Mixed Quantization} that utilizes an efficient binary residual branch for flexible quant steps to handle salient error. For the optimization framework, we theoretically analyzed the convergence and optimality of the LoRA module and propose \textit{Object-Oriented Low-Rank Initialization} to use prior quantization error for informative initialization. We then propose \textit{Memory-based Temporal Relation Distillation} to construct an online time-aware pixel queue for long-term denoising temporal information distillation, which ensures the overall temporal consistency between quantized and full-precision model. Comprehensive experiments on various generation tasks show that our MPQ-DMv2 surpasses current SOTA methods by a great margin on different architectures, especially under extremely low-bit widths.

\end{abstract}

\begin{IEEEkeywords}
Diffusion model, model quantization, model compression, image generation.
\end{IEEEkeywords}

\section{Introduction}

Diffusion Models (DMs)~\cite{ho2020ddpm, dhariwal2021diffusionbeatgan} have recently emerged as a powerful generative paradigm, demonstrating superior performance across a wide range of vision tasks, including image generation~\cite{rombach2022ldm, liu2023text}, video synthesis~\cite{mei2023vidm, hong2022cogvideo, liu2024sora}, image restoration~\cite{yang2024diffusion, wang2025osdface}, and audio generation~\cite{zhang2023audiosurvey}. The success largely stems from the iterative denoising mechanism of diffusion models, which denoise the generation target gradually from a Gaussian noise. This enables diffusion models to model complex data distributions with high fidelity. However, such iterative sampling also imposes significant computational burdens during inference, particularly when dealing with high-resolution data, as each forward pass must process a massive number of parameters repeatedly~\cite{croitoru2023diffusionsurvey}. This poses a major obstacle to deploying diffusion models in latency-sensitive or resource-constrained scenarios, such as mobile devices and edge computing environments~\cite{10.1093/comjnl/bxae110, DAI2024104811, huang2025foundation}.

\begin{figure}
    \centering
    \includegraphics[width=1.0\linewidth]{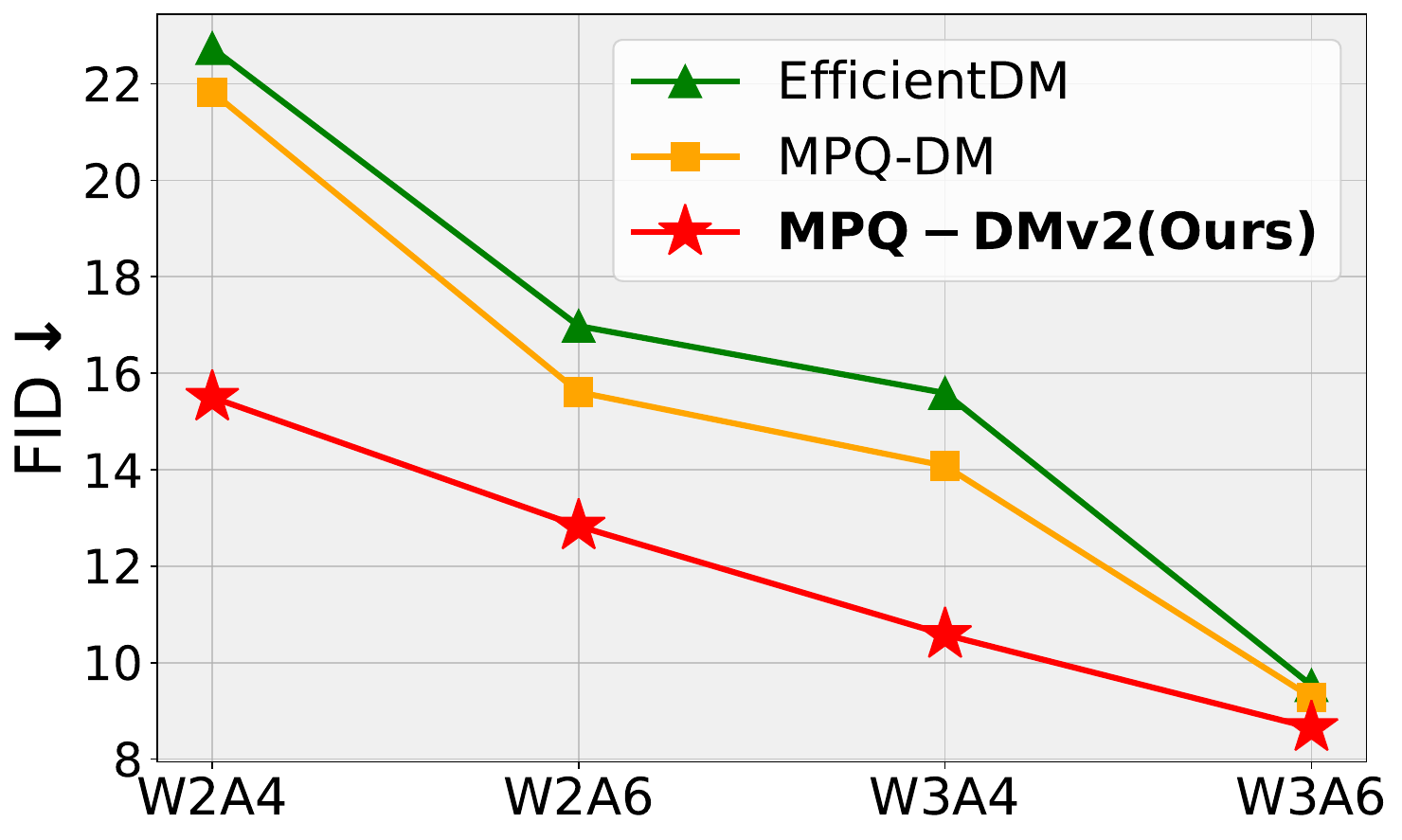}
    \caption{\textbf{The FID score for LDM-8 LSUN-Churches model under different quantization settings, lower FID indicates better performance. WxAy denotes x-bit weight and y-bit activation quantization, e.g., W2A4 denotes 2-bit weight and 4-bit activation quantization.} Our MPQ-DMv2 surpasses current quantization methods by a great margin.}
    \label{fig:teaser}
\end{figure}

To address the computational inefficiency of diffusion models, quantization has been introduced as an effective compression technique. By mapping full-precision floating-point representations to low-bit integers, quantization significantly reduces memory footprint and inference latency~\cite{gholami2022quantizationsurvey}. It has been extensively adopted in both convolutional network~\cite{pilipovic2018cnnquantsurvey, ding2024reg, chen2024scp}, Transformer~\cite{chitty2023transformerquantsurvey}, and Mamba~\cite{tianqi2025qmamba, guan2024qMamba}. Despite these successes, quantizing diffusion models remains highly challenging due to their unique denoising dynamics and the sequential accumulation of discretization errors over denoising time steps. Unlike deterministic models, the intermediate representations in diffusion models are more sensitive to quantization perturbation. Small quantization errors can be amplified across time steps, leading to quality degradation in the final output.

To mitigate this issue, quantization-aware training (QAT)~\cite{esser2019lsq, jacob2018quantizationandtrain, krishnamoorthi1806quantizingwhitepaper} has been explored in diffusion models~\cite{zheng2024binarydm, li2024qdm, zheng2024bidm}, allowing fine-tuning of weights and quantization parameters using large training data. While QAT often yields strong performance, especially under low-bit settings or even binarization, it requires expensive re-training resources comparable to the original model training. In contrast, post-training quantization (PTQ)~\cite{hubara2020ptqlayer, wang2020towards, wei2022qdrop, liu2023pdquant} provides a lightweight alternative by calibrating quantization parameters with a small subset of calibration data. This makes it possible to quantize the pre-trained models while utilizing a small amount of computing resources and maintaining high precision. However, existing PTQ-based methods typically struggle under extremely low bit-widths (e.g., 2-4 bits) due to the limited adaptation capacity.

\begin{figure*}[t]
    \centering
    \includegraphics[width=1.0\linewidth]{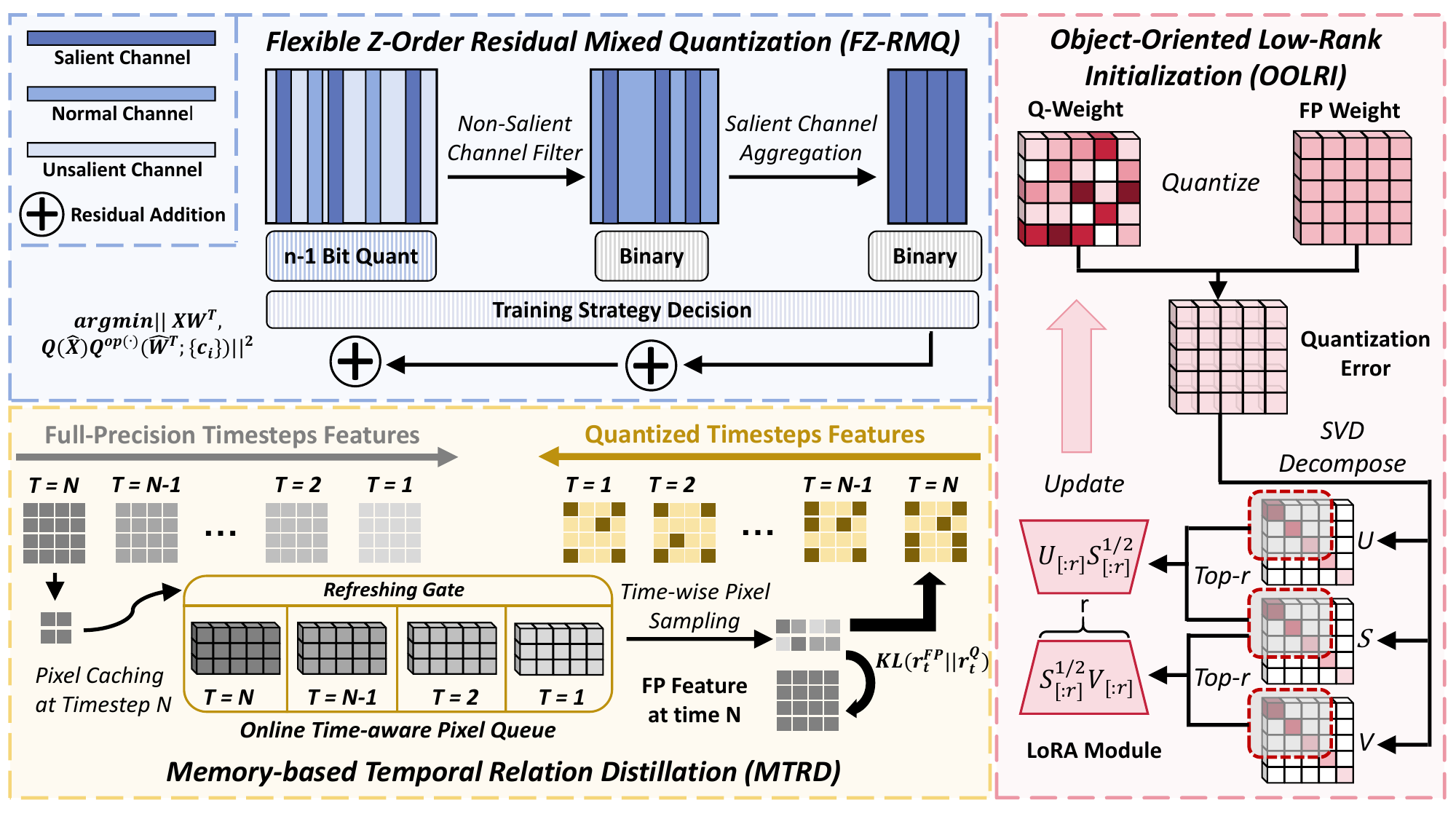}
    \caption{\textbf{Overview of proposed MPQ-DM2 framework.} The framework consists of \textit{Flexible Z-Order Residual Mixed Quantization} to use efficient binary branch for flexible quantizer design, \textit{Memory-based Temporal Relation Distillation} for denoising temporal consistency distillation, and \textit{Object-Oriented Low-Rank Initialization} to use prior quantization error for informative initialization.}
    \label{fig:overview}
\end{figure*}

To overcome the performance degradation, Quantization-Aware Low-Rank Adaptation (QA-LoRA) scheme~\cite{he2023efficientdm, feng2025mpqdm} is introduced into diffusion models. QA-LoRA uses LoRA~\cite{hu2021lora} technique to perform low-rank updates on the quantization weights, which does not affect the quantization process and makes the weight distribution more suitable for quantization operations, thereby improving the quantization performance. QA-LoRA can even achieve QAT-level performance under PTQ-like calibration budgets. While this approach significantly improves robustness, our empirical study reveals that it still suffers from noticeable performance degradation in extremely low-bit settings (e.g., 2-3 bits). We attribute this to three critical bottlenecks in existing quantization pipelines for diffusion models:

\noindent
\textbf{(1) Inflexible quantizer design.} Existing quantization frameworks~\cite{he2023efficientdm, feng2025mpqdm} adopt uniform quantization strategies for the model weights. However, we observe significant variance in channel-wise weight distributions, including sparse and salient outliers. Uniform quantizers fail to effectively allocate bit-widths for such heterogeneous statistics, leading to suboptimal encoding of critical weights.

\noindent
\textbf{(2) Isolated temporal supervision.} Existing methods~\cite{he2023efficientdm, feng2025mpqdm} use time-wise activation quantization strategy to utilize different quantization parameters for different denoising steps. While this strategy ensures parameter adaptation to variant activation distribution, it optimizes intermediate features independently at each timestep, ignoring the strong temporal correlation across denoising stages. This results in temporally inconsistent latent trajectories, which lack the perception of the overall denoising trajectory and result in significant deviation of the final denoising result.

\noindent
\textbf{(3) Cold-start low-rank adaptation.} QA-LoRA modules are often initialized with zero matrix statistics, neglecting the value prior information of the quantization error. This causes a cold-start optimization scenario where the module must learn corrections from scratch, thereby resulting in larger quantization error and slowing the optimization convergence.

To address the quantization limitations of extremely low-bit quantization, we propose a unified quantization framework, Flexible Residual \textbf{M}ixed \textbf{P}recision \textbf{Q}uantization for Low-Bit \textbf{D}iffusion \textbf{M}odels with Temporal Distillation (MPQ-DMv2). In Fig.~\ref{fig:teaser}, we compare our proposed MPQ-DMv2 with current SOTA methods. MPQ-DMv2 significantly surpasses existing methods by a large margin. The overview of the proposed framework is in Fig.~\ref{fig:overview}. The proposed MPQ-DMv2 framework mainly relies on three novel techniques: \textit{Flexible Z-Order Residual Mixed Quantization} (FZ-RMQ): utilizes a primary quantizer for the central weight distribution and a lightweight residual quantizer that uses binary codes to capture salient residuals. This residual pathway effectively preserves important outliers with minimal overhead. \textit{Memory-based Temporal Relation Distillation} (MTRD): introduces a memory-based online distillation mechanism that aligns temporal consistency by leveraging long-term structural dependencies. \textit{Object-Oriented Low-Rank Initialization} (OOLRI): initializes the low-rank module by approximating the prior quantization error using truncated singular value decomposition. Among these techniques, FZ-RMQ flexibly adapts to the presence of outliers under a unified mixed precision framework from the quantization architecture perspective. MTRD and OOLRI utilize quantization error as prior knowledge and consider overall denoising temporal correlation from the optimization perspective. Overall, we jointly improve the low-bit quantization performance of diffusion models from two low-bit perspectives: quantization architecture and optimization framework.

We summarize the main contributions of this paper as:

\begin{itemize}
    \item We present a residual quantization strategy that flexibly adjusts the quant scale to adapt to the distribution of salient outliers. We further propose a hierarchical extension that supports channel-wise mixed-precision framework with shared base quantizer and binary residual. The framework is guided by output discrepancy for bit allocation and optimization strategy selection, ensuring optimal quantization architecture search.
    
    \item We introduce a memory-based online distillation mechanism that aligns temporal consistency by leveraging long-term dependencies across denoising steps. Rather than independently supervising each timestep, we propagate temporal relational information from a memory-efficient bank of full-precision features, guiding the quantized model toward temporally coherent generations.
    
    \item We consider the initialization of QA-LoRA as a local optimization problem and theoretically proven the optimality and convergence of the solution. We propose to use quantization residual as prior knowledge and approximate the module using truncated singular value decomposition, generating informed low-rank corrections that serve as better initialization.

    \item Comprehensive experiments on Unet-based LDM and Stable Diffusion models and Transformer-based DiT-XL models show that our MPQ-DMv2 surpasses the existing SOTA methods by a wide margin on various architectures and bit-widths, which demonstrates the effectiveness and generalization of our method.
\end{itemize}

Note that this paper significantly extends our preliminary conference work~\cite{feng2025mpqdm}. The earlier method, MPQ-DM, introduced in the conference paper, served as an initial exploration into low-bit mixing precision quantization for diffusion models. In this paper, we present an enhanced and generalized version, MPQ-DMv2, which introduces substantial architectural and algorithmic improvements. We first analyze the bottlenecks of existing low-bit quantization frameworks in terms of quantization architecture and optimization methods. To address these, we propose FZ-RMQ, a flexible quantizer tailored to accommodate salient distributions under low-bit constraints. Additionally, we introduce MTRD and OOLRI, two novel optimization strategies designed to mitigate issues of temporal inconsistency and LoRA cold-start initialization, respectively.
Our enhanced scheme, MPQ-DMv2, is evaluated across a broader range of diffusion model architectures and bit-width settings compared to the original work. Furthermore, we provide in-depth analysis and discussion regarding the performance, robustness, and generalization of the proposed methods. MPQ-DMv2 provides new insight and sets a new benchmark for low-bit quantization in diffusion models.

\section{Related Work}

\subsection{Diffusion Model}
%
Diffusion models \cite{ho2020ddpm, song2020ddim} have achieved remarkable achievements in multiple fields, such as image generation~\cite{rombach2022ldm, peebles2023dit, yang2025multi}, super-resolution~\cite{wu2024osediff}, image restoration~\cite{wang2025osdface}, video generation~\cite{hong2022cogvideo, liu2024sora}, and audio generation~\cite{zhang2023audiosurvey}. The most unique mechanism of diffusion models is the iterative denoising process, which often requires dozens or even hundreds of repeated forward propagation. Therefore, many works to reduce the number of sampling steps have been proposed to accelerate the inference of diffusion models. Some efficient samplers~\cite{song2020ddim, liu2022plms, lu2022dpm, lu2022dpmm} directly reduce the number of denoising steps from the perspective of the sampling function. Some knowledge distillation works~\cite{salimans2022pd, song2023consistency, feng2024rdd} reduce step redundancy in the pre-trained model by retraining to distill a few-step diffusion model. However, these works focus on reducing the sampling steps of diffusion models, but still cannot solve the computational burden of diffusion models in a single step. This paper focuses on compressing the storage of diffusion models and reducing the burden of single inference.

\subsection{Model Quantization}
Model quantization~\cite{gholami2022quantizationsurvey, jacob2018quantizationandtrain, krishnamoorthi1806quantizingwhitepaper} is an effective compression and acceleration method that quantizes full-precision data into low-bit data. Among them, it can be divided into Quantization-Aware Training (QAT)~\cite{esser2019lsq, feng2025qvdit, qin2023diverse} and Post-Training Quantization (PTQ)~\cite{krishnamoorthi1806quantizingwhitepaper, ding2024reg, gong2025pushing} based on whether the model is fully re-trained. QAT requires a large amount of training data to fine-tune the model weights and quantization parameters, often achieving good performance under ultra-low bit width or even binarization. However, its enormous computing resources and training time are the main bottlenecks of QAT. PTQ has received more attention as an efficient method for fine-tuning only the quantization parameters of the model. PTQ only requires a small amount of calibration data to fine-tune the quantization parameters of the model and preserve its good performance after quantization. However, severe performance collapse often occurs at extremely low bits. The existing quantization methods mainly design different quantization methods for different model architectures, such as CNN~\cite{pilipovic2018cnnquantsurvey, dong2019hawq}, Transformer~\cite{chitty2023transformerquantsurvey, xiao2023smoothquant}, Mamba~\cite{tianqi2025qmamba, guan2024qMamba}, and Diffusion Models~\cite{li2023qdiffusion}.

\subsection{Diffusion Quantization}
Diffusion models have brought new challenges to quantization due to the unique iterative denoising mechanism. Therefore, quantization methods targeting diffusion models have been proposed from different perspectives. For PTQ methods, PTQ4DM \cite{shang2023ptq4dm} and Q-Diffusion \cite{li2023qdiffusion} have made initial exploration from the perspective of quantization objective and frameworks. The following studies PTQ-D \cite{he2024ptqd}, TFMQ-DM \cite{huang2024tfmq}, and APQ-DM \cite{wang2024apqdm}, have made improvements in the direction of cumulation of quantization error, temporal feature maintenance, and calibration data construction. Moreover, PTQ4DiT~\cite{wu2024ptq4dit} also proposes a quantization method to adapt to the unique data distribution for the Diffusion Transformer's special framework. However, the performance of PTQ-based methods still suffers from severe degradation at extremely low bit-width (e.g., under 4-bit). Therefore, QAT methods for diffusion models have been proposed. Q-dm~\cite{li2024qdm} maintains the quantization performance of the diffusion model at 2-3 bits. BinaryDM~\cite{zheng2024binarydm} and BiDM~\cite{zheng2024bidm} utilize fine-grained quantizer design and distillation methods to preserve the performance of diffusion models in extreme binary quantization. However, these QAT methods require a lot of extra training time compared with PTQ methods, resulting larger training burden.

To combine the advantages of QAT and reduce the required training time, QuEST~\cite{wang2024quest} only fine-tunes the weights of sensitive layers to reduce fine-tuning burden and improve performance. EfficientDM \cite{he2023efficientdm} uses Quantization-Aware Low-Rank Adaptation (QA-LoRA) to fine-tune the quantized diffusion model with PTQ level low computing resources while maintaining high precision. On this basis, MPQ-DM~\cite{feng2025mpqdm} uses inter-channel mixed precision quantization to further improve the quantization performance at ultra-low bit widths. However, we identify that the existing quantization frameworks have inherent shortcomings. This includes the unfriendliness of uniform quantizers towards outliers, insufficient temporal information maintenance ability of time-wise quantization strategies, and the zero initialization of LoRA module not utilizing prior knowledge of quantization errors. Therefore, we propose MPQ-DMv2, which retains the existing mixed precision quantization framework and makes improvements from the above three perspectives.

\section{Preliminary}

\subsection{Model Quantization}

Model quantization~\cite{gholami2022quantizationsurvey, jacob2018quantizationandtrain} maps floating-point model weights and activations to low-bit integer representations to reduce memory consumption and accelerate inference. For a floating-point vector $\mathbf{x}_f$, the quantization process is formally defined as:
\begin{equation}
\begin{gathered}
    \mathbf{x}^Q = Q(\mathbf{x}, s, z) = s \cdot \left[ \text{clip}\left(\left\lfloor \frac{\mathbf{x}_f}{s} \right\rceil + z, 0, 2^N{-}1\right) - z \right], \\
    s = \frac{u - l}{2^N - 1}, \quad z = - \left\lfloor \frac{l}{s} \right\rceil,
\end{gathered}
\end{equation}
where $\mathbf{x}^Q$ denotes the quantized vector, $\left\lfloor \cdot \right\rceil$ represents rounding to the nearest integer, and $\text{clip}(\cdot)$ clamps the value into the range $[0, 2^N{-}1]$. Here, $s$ is the scale factor, and $z$ is the quantization zero-point. The values $l$ and $u$ denote the lower and upper bounds of the quantization range, respectively, which are determined by the distribution of $\mathbf{x}$ and the target bit-width $N$.

\subsection{Channel-wise Pre-scaling for Quantization}

In diffusion models, weight distributions across channels can exhibit significant heterogeneity~\cite{feng2025mpqdm, li2023qdiffusion, wu2024ptq4dit}, with some channels containing extreme outliers that severely degrade the efficacy of uniform quantization. To address this issue, existing methods~\cite{feng2025mpqdm, wu2024ptq4dit} introduce a channel-wise scaling mechanism that preconditions each input channel to suppress the influence of outliers.

Given a weight matrix $\mathbf{W} \in \mathbb{R}^{C_{\text{out}} \times C_{\text{in}}}$, where $C_{\text{out}}$ and $C_{\text{in}}$ denote the output and input channels, respectively, a channel-wise scaling factor $\delta_i$ is computed for each input channel $i$ based on calibration data. Specifically, for a calibration activation matrix $\mathbf{X} \in \mathbb{R}^{C_{\text{in}} \times d}$, the per-channel scaling factor is defined as:
\begin{equation}
\delta_i = \sqrt{\frac{\max(|\mathbf{W}_i|)}{\max(|\mathbf{X}_i|)}},
\end{equation}
where $\mathbf{W}_i$ and $\mathbf{X}_i$ denote the $i$-th column of $\mathbf{W}$ and $\mathbf{X}$, respectively. This factor is used to normalize the input and weight distributions, thereby reducing the prominence of outliers and improving quantization fidelity.

To preserve computational equivalence, both the input and weights are pre-scaled in a way that maintains the original matrix product:
\begin{equation}
\begin{gathered}
    \mathbf{X}\mathbf{W}^{\top} = \hat{\mathbf{X}}\hat{\mathbf{W}}^{\top}, \\
    \text{where} \quad \hat{\mathbf{X}} = \mathbf{X} \cdot \text{diag}(\delta), \quad \hat{\mathbf{W}} = \text{diag}^{-1}(\delta) \cdot \mathbf{W}.
\end{gathered}
\end{equation}

\subsection{Channel-wise Mixed-Precision Quantization}

Conventional quantization methods~\cite{he2023efficientdm, li2023qdiffusion} typically employ a uniform bit-width across all channels within a given layer. However, in diffusion models, weight distributions can vary dramatically across channels~\cite{feng2025mpqdm, wu2024ptq4dit}. Some exhibit heavy-tailed behavior with substantial outliers, which causes severe performance degradation under low bit quantization.

To mitigate this limitation, MPQ-DM~\cite{feng2025mpqdm} employs a channel-wise mixed-precision quantization (MPQ) scheme. In this strategy, the quantization bit-width is adaptively assigned to each channel based on its statistical properties. Specifically, the kurtosis of each channel’s weight distribution is used to estimate its “tailedness” which naturally quantifies the sensitivity to quantization. For a weight matrix $\mathbf{W} \in \mathbb{R}^{C_{\text{out}} \times C_{\text{in}}}$, the kurtosis for input channel $i$ is calculated as:
\begin{equation}
    \kappa_i = \frac{\mathbb{E}[(\mathbf{W}_i - \mu)^4]}{\sigma^4}, \quad \mu = \mathbb{E}[\mathbf{W}_i],\quad \sigma^2 = \text{Var}[\mathbf{W}_i],
\end{equation}
where $\mathbf{W}_i$ is the weight vector of channel $i$. Higher values of $\kappa_i$ indicate the presence of more significant outliers, which justify the use of higher bit-widths for those channels.

Given a target average bit-width $N$, MPQ-DM formulates an optimization problem to determine a bit-width vector $[c_1, \ldots, c_n]$ while minimizing the quantization error:
\begin{equation}
\begin{aligned}
    \min_{\{c_i\}} \quad & \left\| \mathbf{X}\mathbf{W}^{\top} - Q(\hat{\mathbf{X}})Q(\hat{\mathbf{W}}^{\top} | \{c_i\}) \right\|_2^2, \\
    \text{s.t.} \quad & c_i \in \{N{-}1, N, N{+}1\},\quad \sum_{i=1}^{n} c_i = nN,
\end{aligned}
\end{equation}
where $c_i$ denotes the quantization bit-width for channel $i$.

\section{Method}

\subsection{Flexible Z-Order Residual Mixed Quantization}

\subsubsection{Architecture Formulation}
Existing mixed-precision quantization strategies, such as MPQ-DM~\cite{feng2025mpqdm}, attempt to preserve model performance by allocating varying bit-widths across weight channels according to statistical difficulty metrics (e.g., kurtosis). Specifically, channels with higher kurtosis which indicate a higher presence of outliers are assigned more bits to mitigate the quantization error. However, existing frameworks~\cite{ma2023ompq, dong2019hawq, feng2025mpqdm} only focus on the mixed precision bit allocation strategy, while neglecting the equally important quantizer design. Existing frameworks rely solely on uniform quantizers, which are inherently limited in representing distributions with extreme and sparse outliers. Though few, these outliers can dominate the quantization range, leading to suboptimal quantization performance. We visualize the outlier-existing weight distribution on LDM-4 model in Fig.~\ref{fig:residual_vs_uniform}. It can be seen in Fig.~\ref{fig:uniform_quant} that for the uniform quantizer, significant bits are wasted on rarely occurring values, thereby degrading the precision for the core of the distribution.

To overcome this limitation, we propose a novel quantization framework called \textbf{Flexible Z-Order Residual Mixed Quantization (FZ-RMQ)}. The key idea is to decouple the quantization of dense distribution cores and sparse outliers using a dual-quantizer design: a \textit{main quantizer} for the concentrated majority and a lightweight \textit{residual quantizer} for the high-magnitude quantization residuals. This enables more expressive, fine-grained representation of weights with minimal computational overhead.

\subsubsection{Residual Quantization with Adaptive Step Modulation}

For residual quantization, we use a residual quantizer to compensate for the difference between the quantization target and the main quantizer. By adding the residual quantizer and the main quantizer, we can learn more flexible quantization steps while keeping the total number of bits constant. Formally, let $\mathbf{x} \in \mathbb{R}^d$ denote a floating-point weight vector, and let $Q_n(\cdot)$ represent a standard $n$-bit uniform quantizer with learnable step size $\Delta$. Instead of directly applying $Q_n$ to $\mathbf{x}$, we decompose the quantization as follows:
\begin{equation}
\begin{gathered}
\hat{Q}_n(\mathbf{x}) = Q_{n_1}(\mathbf{x}) + Q^{\mathrm{res}}_{n_2}(\mathbf{x} - Q_{n_1}(\mathbf{x})), \\
\text{s.t.} \quad n=n_1+n_2,
\end{gathered}
\label{eq:first_res_quant_scheme}
\end{equation}
where $Q_{n_1}$ denotes the main quantizer, and $Q^{\mathrm{res}}_{n_2}$ is a residual quantizer applied to the quantization error. In theory, there are multiple options for the values of $n_1$ and $n_2$. In our implementation, in order to make the residual part more efficient and preserve the expressive power of the main quantizer, we typically set $n_1 = n - 1$ and $n_2 = 1$. This design simplifies the residual quantizer into a binary quantizer, which can achieve inference acceleration~\cite{zhang2019dabnn} using efficient bitwise operations such as XNOR and bitcount to replace matrix multiplication. In this way, Eq.~\eqref{eq:first_res_quant_scheme} can be rewritten as:
\begin{equation}
\begin{gathered}
Q^{\mathrm{res}}_1(r) = \Delta_{\mathrm{res}} \cdot \mathrm{sign}(r), \quad r \in \mathbb{R}, \\
\hat{Q}_n(\mathbf{x}) = Q_{n-1}(\mathbf{x}) + Q^{\mathrm{res}}_{1}(\mathbf{x} - Q_{n_1}(\mathbf{x})),
\end{gathered}
\end{equation}
where $\Delta_{\mathrm{res}}$ is a learnable residual step size.

This construction modulates the quantization granularity in a data-adaptive manner. The main quantizer preserves fine-grained precision for the central part of the distribution, while the residual term allows for efficient handling of sparse outliers via a binary operation. The combined quantization step size becomes piecewise non-uniform, effectively expanding the quantization dynamic range while preserving core fidelity. We visualized the effect of residual quantization in Fig.~\ref{fig:res_quant}.

\begin{figure}[t]
    \centering
    \subfloat[][Uniform Quantizer]{
        \includegraphics[width=0.47\linewidth]{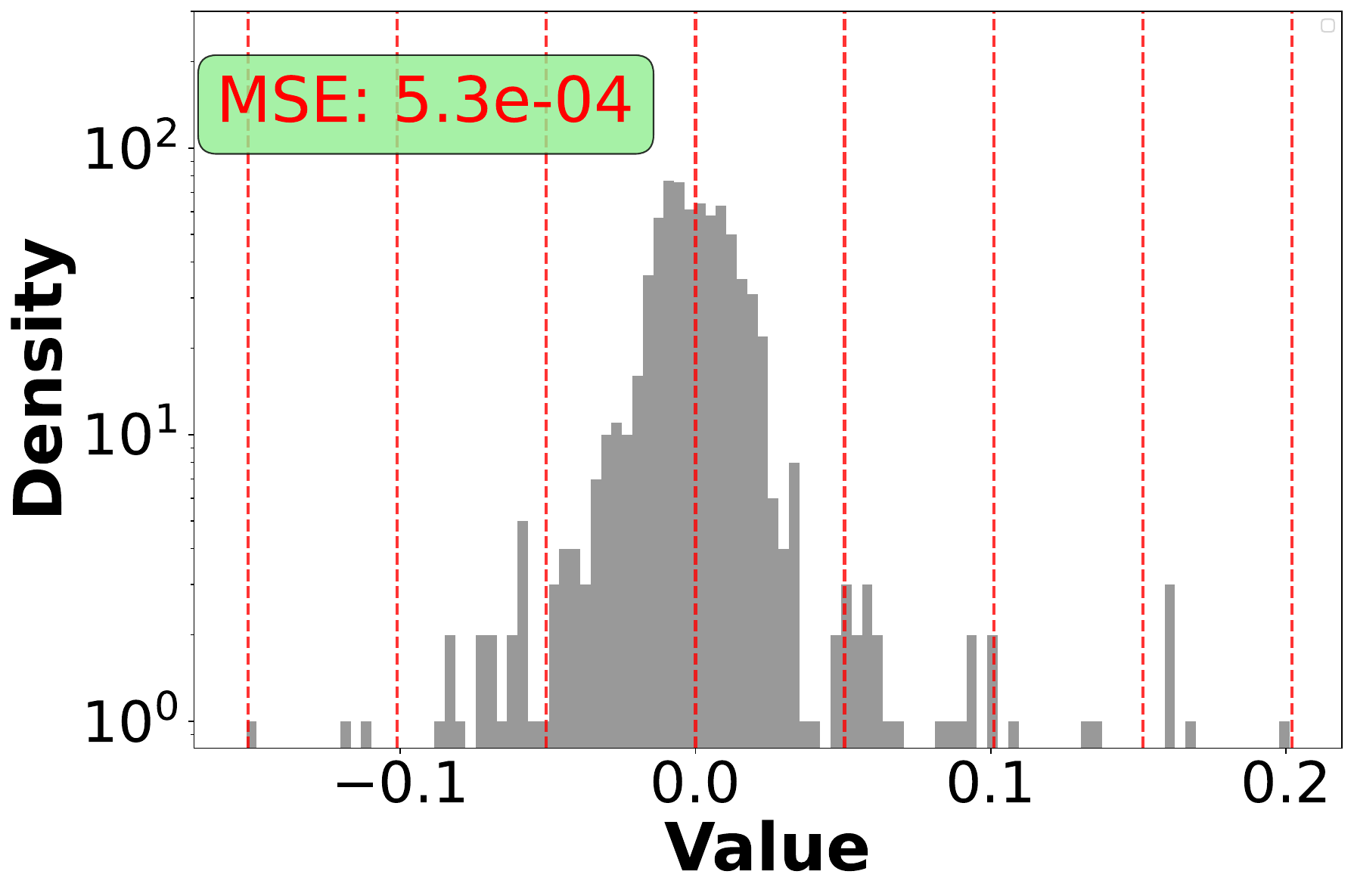}
        \label{fig:uniform_quant}
    }
    \subfloat[][Residual Quantizer]{
        \includegraphics[width=0.47\linewidth]{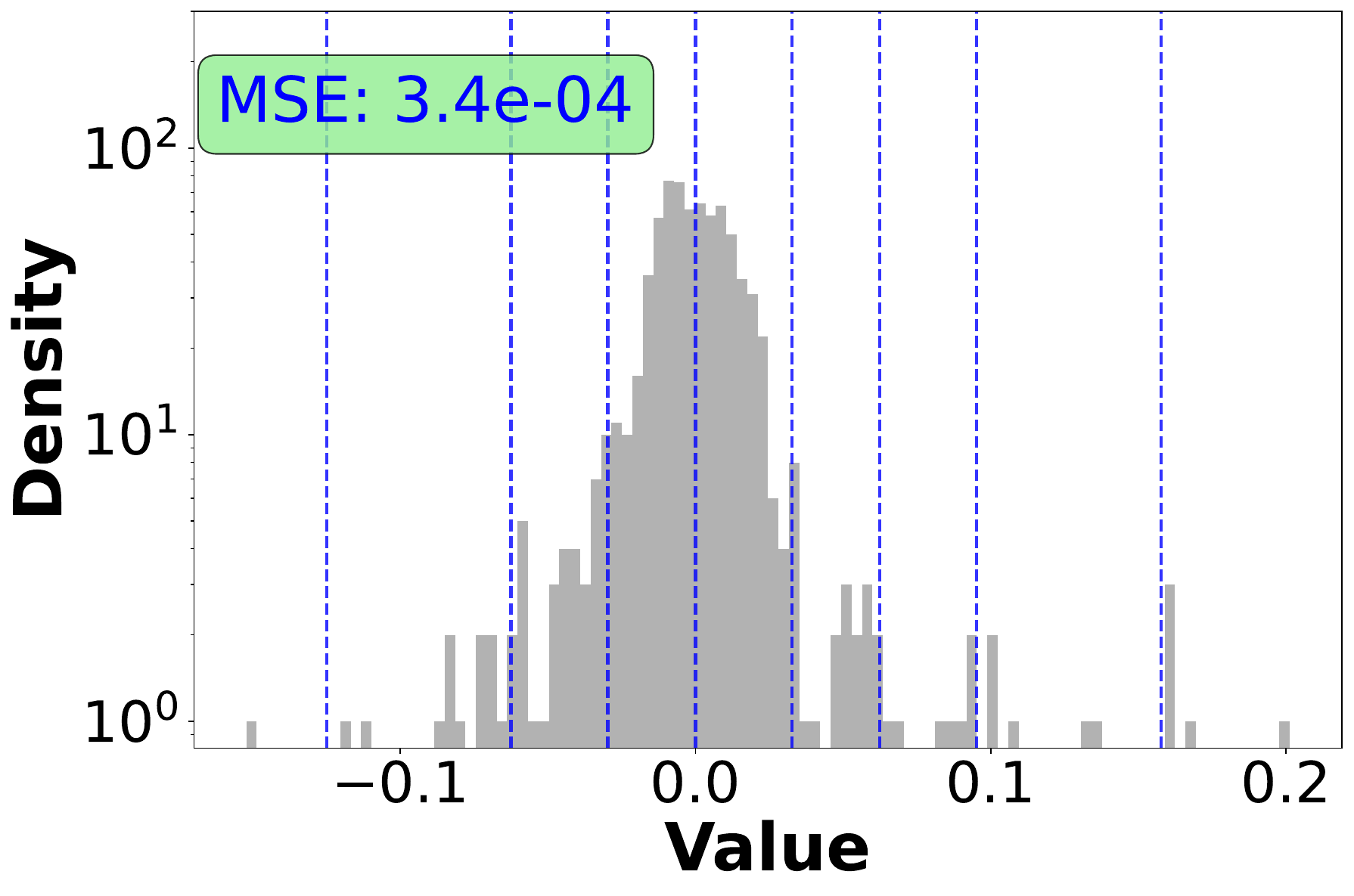}
        \label{fig:res_quant}
    }
    \caption{Comparison of 3-bit quantization step size distribution on LDM-4 model input\_blocks[1].emb\_layer channel 44. We label the corresponding quantization error on the top left part. The residual quantizer can flexibly adjust the quant step for non-uniform distribution, resulting in less quantization error.}
    \label{fig:residual_vs_uniform}
\end{figure}

\subsubsection{Unified Hierarchical Residual Mixed Precision}

To maintain compatibility with the mixed-precision design philosophy of existing frameworks~\cite{feng2025mpqdm} while extending its flexibility, we adopt a hierarchical residual quantization strategy for a channel-wise weight matrix $\hat{\mathbf{W}} \in \mathbb{R}^{c_{out}\times c_{in}}$:
\begin{equation}
\begin{gathered}
    \begin{cases}
        \hat{Q}_{n-1}(\mathbf{\hat{\mathbf{W}}}_i)=Q_{n-1}(\mathbf{\hat{\mathbf{W}}}_i),\\
        \hat{Q}_{n}(\mathbf{\hat{\mathbf{W}}}_i)=\hat{Q}_{n-1}(\mathbf{\hat{\mathbf{W}}}_i) + Q_1^{\mathrm{res}}(\hat{\mathbf{W}}_i - Q_{n-1}(\hat{\mathbf{W}}_i)),\\
        \hat{Q}_{n+1}(\mathbf{\hat{\mathbf{W}}}_i)=\hat{Q}_{n}(\mathbf{\hat{\mathbf{W}}}_i) + Q_1^{\mathrm{res},2}(\hat{\mathbf{W}}_i - \hat{Q}_{n}(\mathbf{\hat{\mathbf{W}}}_i)),
    \end{cases}\\
    \hat{Q}_{\mathrm{res}}(\hat{\mathbf{W}}) = \hat{Q}_{c_i}(\hat{\mathbf{W};i}),
\end{gathered}
\end{equation}
where $\hat{Q}_{c_i}(\hat{\mathbf{W};i})$ denotes channel-wise quantization for $\hat{\mathbf{W}}$, $c_i$ denotes the assigned bit-width for channel $i$. This hierarchical construction allows all channels to share the same base quantizer $Q_{n{-}1}$, thereby preserving quantization consistency across the entire tensor. The higher bit-widths ($n$ or $n{+}1$) are simply extended via lightweight binarized residuals, which can be implemented by binary channel mask and tensor addition.

\subsubsection{Flexible Optimization Strategies: Joint vs. Separate Training}
\label{subsec:quant_optimization}

The residual quantizer effectively solves the problem of distribution quantization with extreme outliers, but the reduction of bits in the main quantizer may affect the expression ability of partially uniform distributions. In order to preserve the powerful expressive power of the high-bit main quantizer, we propose two training paradigms to optimize the dual-quantizer configuration:

\begin{itemize}
    \item \textbf{Separate Optimization:} The main and residual quantizers are independently parameterized with their own learnable step sizes. This allows greater expressiveness and adaptivity, particularly useful for channels with extreme salient outliers.
    
    \item \textbf{Joint Optimization:} The main and residual quantizers are co-optimized with shared parameters (e.g., unified step size). This approach offers stronger expressiveness for uniform distribution and enables efficient inference by storing binary quantizer parameters as bit-wise shifts of a common base quantizer.
\end{itemize}

To determine which strategy is more beneficial for variant channel weight distribution, we define an activation-aware metric based on output distortion:
\begin{equation}
\begin{gathered}
    \underset{op(\cdot)}{\arg\min} \left\| \mathbf{X} \mathbf{W}^\top - Q(\hat{\mathbf{X}}) \cdot Q^{op(\cdot)}(\hat{\mathbf{W}}^\top) \right\|^2, \\
    \text{s.t.} \quad Q^{op(\text{joint})} = Q(\hat{\mathbf{W}}^{\top}), \quad
    Q^{op(\text{separate})} = \hat{Q}_{\mathrm{res}}(\hat{\mathbf{W}}^{\top}).
\end{gathered}
\end{equation}
This formulation allows data-driven selection between the two optimization paradigms based on minimum quantization-induced output error. For different channel distributions, we flexibly choose our optimization method without affecting the proposed residual quantization framework.

\subsubsection{Search Objective and Group-wise Channel Assignment}

Given the dual-bit design and optimization modes, we formulate the overall channel-wise bit allocation and quantizer selection problem as:
\begin{equation}
\begin{aligned}
    \underset{\{c_i, op_i\}}{\arg\min} \quad & \left\| \mathbf{X} \mathbf{W}^\top - Q(\hat{\mathbf{X}}) \cdot Q^{op(\cdot)}(\hat{\mathbf{W}}^\top; \{c_i\}) \right\|^2, \\
    \text{s.t.} \quad & c_i \in \{n{-}1, n, n{+}1\},\quad \sum_{i=1}^{C} c_i = C \cdot n,
\label{eq:quant_search_objective}
\end{aligned}
\end{equation}
where $C$ is the total number of channels and $Q^{op(\cdot)}$ is selected from the two optimization schemes in Sec.~\ref{subsec:quant_optimization}. To reduce the search space and facilitate efficient deployment, we partition channels into $g$ groups (empirically $g = C/10$) and perform group-wise bit-width assignment under fixed average budget constraints. This strategy strikes a practical balance between quantization granularity and computational cost.

Through Eq.~\eqref{eq:quant_search_objective}, we flexibly use three schemes: channel-wise mixed precision bit allocation, residual quantization design, and residual optimization strategy selection to minimize activation-aware quantization loss and achieve better quantization performance.

\subsection{Memory-based Temporal Relation Distillation}

\subsubsection{Architecture Formulation}

Existing diffusion model quantization methods like EfficientDM~\cite{he2023efficientdm} and MPQ-DM~\cite{feng2025mpqdm} typically adopt a \textit{time-wise activation quantization} strategy. Specifically, during both training and inference, they use a separate set of activation quantization parameters $(s_t, z_t)$ for each denoising timestep $t$. This approach aims to account for the statistical distribution shift of activation values across different stages of the diffusion trajectory~\cite{li2023qdiffusion, huang2024tfmq}. Formally, for the activation tensor $\hat{\mathbf{X}}_t$ at timestep $t$, the quantized version is defined as:
\begin{equation}
    \hat{\mathbf{X}}^Q_t = Q(\hat{\mathbf{X}}_t; s_t, z_t),\quad t \in [1,\dots,T],
\end{equation}
where $s_t$ and $z_t$ denote the timestep-specific scale and zero-point, respectively, and $T$ is the total number of denoising steps. The typical training loss aligns the quantized output with its full-precision (FP) counterpart:
\begin{equation}
    \mathcal{L}_{\text{align}} = \left\| \mathbf{X}_t \mathbf{W}^\top - Q(\hat{\mathbf{X}}_t; s_t, z_t) Q^{op(\cdot)}(\hat{\mathbf{W}}^\top) \right\|^2.
\end{equation}

While this per-timestep quantization improves accuracy by adapting to local distributional characteristics, it inherently treats each timestep as an independent optimization target. This independence ignores the temporal correlations embedded in the denoising process, where consecutive timesteps form a continuous trajectory rather than isolated steps. Consequently, the quantization optimization lacks a global temporal perspective, potentially weakening its ability to preserve the overall denoising semantics.

To address this limitation, we propose \textbf{Memory-based Temporal Relation Distillation (MTRD)}. MTRD introduces a global memory mechanism that captures the latent inter-timestep relationships by distilling relational knowledge across the diffusion timeline. It enables each timestep to be optimized not only based on its individual FP supervision but also through relational alignment with other steps. This holistic view effectively encodes the temporal structure of the diffusion process, leading to more temporally coherent quantized representations.

\begin{figure}[t]
    \centering
    \subfloat[][w/o MTRD]{
        \includegraphics[width=0.30\linewidth]{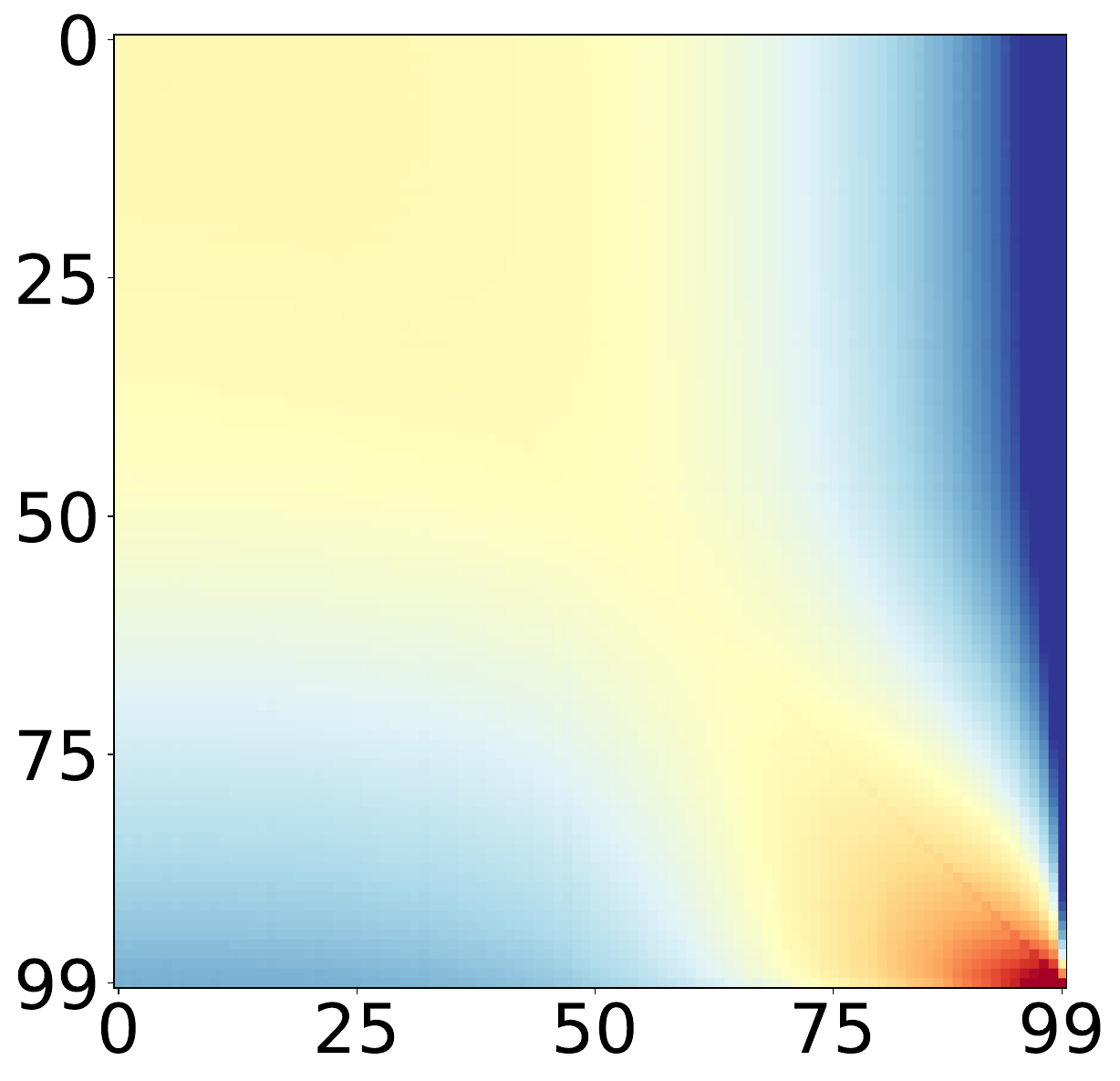}
    }
    \subfloat[][w MTRD]{
        \includegraphics[width=0.30\linewidth]{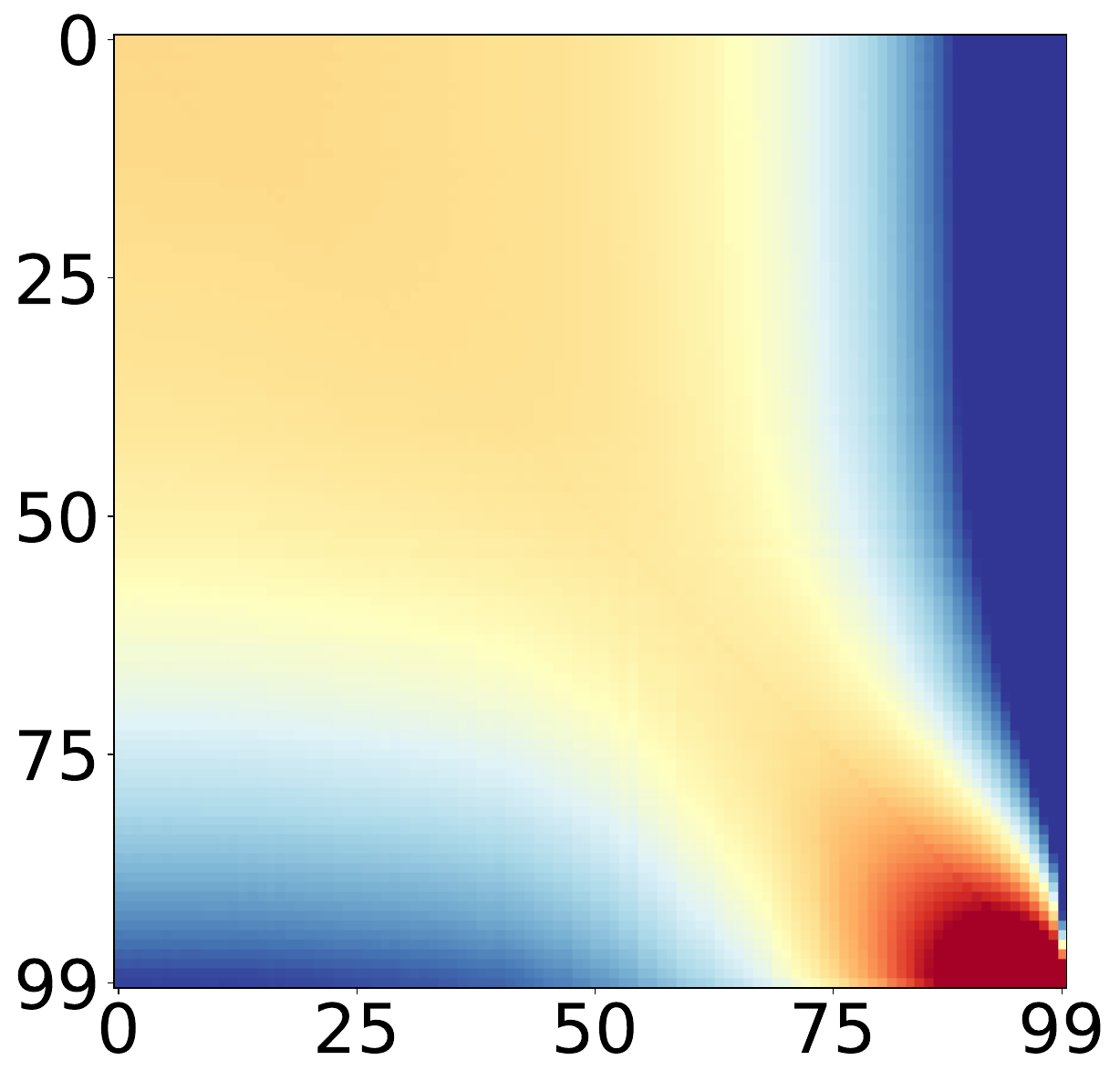}
    }
     \subfloat[][Full-Precision]{
        \includegraphics[width=0.30\linewidth]{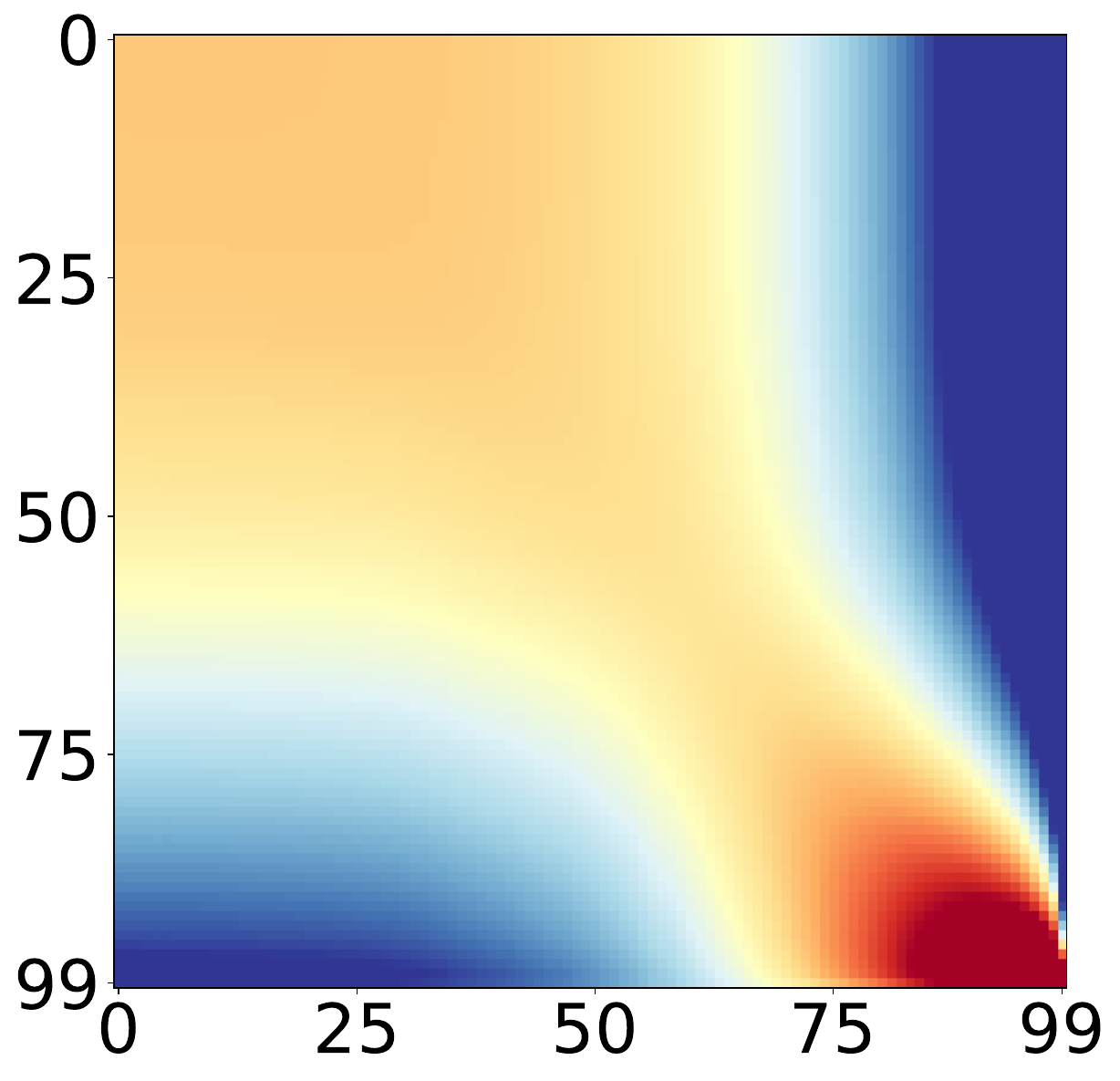}
    }
    \caption{Cosine similarity heatmaps across timesteps in the denoising process on LDM-8 under W3A6 quantization. ``w/o MTRD'' denotes without using MTRD technique. MTRD enhances temporal global relational consistency.}
    \label{fig:time_relation_mtrd}
\end{figure}

\subsubsection{Temporal Memory Queue}

To support global temporal modeling, we introduce a temporal memory queue $\mathcal{Q}$, which caches representative FP features from different timesteps as distillation anchors. This queue enables each timestep to be aware of all other timesteps relations during optimization. Specifically, we maintain $T$ independent first-in-first-out (FIFO) queues:
\begin{equation}
    \mathcal{Q} = \left\{ \mathcal{Q}_1, \mathcal{Q}_2, \dots, \mathcal{Q}_T \right\},\quad \mathcal{Q}_t \in \mathbb{R}^{L \times d},
\label{eq:queue_size}
\end{equation}
where $\mathcal{Q}_t$ stores up to $L$ feature vectors of dimension $d$ sampled from timestep $t$'s FP outputs.

At each training iteration involving timestep $t$, we randomly select a subset of $n$ feature vectors $\{\mathbf{x}_t^{(i)}\} \in \mathbb{R}^{n \times d}, n \ll s$ from the FP activation $\mathbf{X}_t \in \mathbb{R}^{s \times d}$ and push them into $\mathcal{Q}_t$. This ensures the memory remains lightweight while still maintaining a representative set of temporal features.

\subsubsection{Reference Relation Matrix Construction}

To embed temporal relational awareness into the distillation process, we construct a global reference matrix $\mathbf{F}_{\text{ref}}$ by sampling $k$ features from each queue $\mathcal{Q}_t$:
\begin{equation}
\begin{gathered}
    \mathbf{F}_{\text{ref}} = \left[ \mathbf{f}_1, \mathbf{f}_2, \dots, \mathbf{f}_R \right] \in \mathbb{R}^{R \times d}, \quad R = T \cdot k, \\
    \text{where}~\mathbf{f}_t = \{ \mathbf{x}_t^{(j)} \} \in \mathbb{R}^{k \times d}~\text{are sampled from}~\mathcal{Q}_t.
\label{eq:queue_sample_size}
\end{gathered}
\end{equation}

For a given feature $\hat{\mathbf{x}}_t$ from the quantized activation and its FP counterpart $\mathbf{x}_t^{\text{FP}}$, we compute their similarity with the reference matrix to obtain two relational distributions:
\begin{align}
    \mathbf{r}_t^{\text{FP}} &= \text{Softmax} \left( \frac{ \mathbf{F}_{\text{ref}} \cdot \mathbf{x}_t^{\text{FP}} }{ \tau } \right), \\
    \mathbf{r}_t^{\text{Q}} &= \text{Softmax} \left( \frac{ \mathbf{F}_{\text{ref}} \cdot \hat{\mathbf{x}}_t }{ \tau } \right),
\end{align}
where $\tau$ is a temperature parameter that controls the softness of the distribution.

These relational distributions encode how the feature relates to global temporal context~\cite{yang2023online, feng2024rdd, jing2020self}. The quantized feature is encouraged to mimic the relational structure of the FP one, rather than only matching it in feature space.

\subsubsection{Time-aware Distillation Loss}

To enforce the relational alignment between quantized and FP representations, we employ a Kullback-Leibler (KL) divergence loss:
\begin{equation}
    \mathcal{L}_{\text{MTRD}}^{(t)} = \text{KL}\left( \mathbf{r}_t^{\text{FP}} \, \| \, \mathbf{r}_t^{\text{Q}} \right).
\label{eq:KL_loss}
\end{equation}
This loss quantifies the discrepancy in how each feature relates to global temporal reference features.

The total distillation loss is accumulated over a mini-batch of timesteps:
\begin{equation}
    \mathcal{L}_{\text{MTRD}} = \frac{1}{|B|} \sum_{t \in B} \mathcal{L}_{\text{MTRD}}^{(t)},
\end{equation}
where $B$ denotes the set of sampled timesteps in the current training batch.

\subsubsection{Overall Optimization Loss}

We integrate our relation-aware distillation into the overall optimization objective:
\begin{equation}
    \mathcal{L}_{\text{total}} = \mathcal{L}_{\text{align}} + \alpha \mathcal{L}_{\text{MTRD}},
\label{eq:total_loss}
\end{equation}
where $\alpha$ is a balancing hyperparameter that controls the trade-off between standard alignment loss and temporal relational regularization.

In summary, MTRD complements the timestep-wise quantization by enabling each step to be optimized in light of the entire denoising trajectory. This approach effectively mitigates the local-isolation problem of conventional methods and enhances the temporal consistency of quantized diffusion models. We visualize the time-wise correlation heatmaps of all features across denoising timesteps in LDM-8 models in Fig.~\ref{fig:time_relation_mtrd}. With MTRD, the quantized correlation heatmaps are more similar to the full precision model, demonstrating their ability to preserve temporal information.

\subsection{Object-Oriented Low-Rank Initialization (OOLRI)}

\subsubsection{Architecture Formulation}
Existing frameworks~\cite{he2023efficientdm, feng2025mpqdm} utilize Quantization-aware low-rank adaptation (QA-LoRA) as an effective technique to recover performance loss introduced by low-bit quantization. Specifically, QA-LoRA injects trainable low-rank perturbations $\Delta \mathbf{W} = L_1 L_2$ into the quantized weights:
\begin{equation}
    Q(\mathbf{W}^*) = Q(\mathbf{W+\Delta \mathbf{W}}) = Q(\mathbf{W} + L_1 L_2),
\end{equation}
where $L_1 \in \mathbb{R}^{m \times r}$ and $L_2 \in \mathbb{R}^{r \times n}$ denote the low-rank components, typically with $r \ll \min(m,n)$.

In previous works~\cite{he2023efficientdm, feng2025mpqdm}, $L_1$ is initialized using Kaiming Initialization, while $L_2$ is simply initialized to zero~\cite{hu2021lora} with the following characteristics:
\begin{equation}
    L_1L_2 = \mathbf{0}.
\end{equation}
This naive zero-initialization leads to a \emph{cold start} as it does not change any original weights at the beginning of the optimization process. This learns the low rank modules completely from scratch. However, we identify that this cold initialization fails to exploit the prior structural information from the quantization process itself. As a result, the optimization must compensate from scratch for the quantization-induced error, which slows convergence and may miss informative priors.

Quantization inevitably introduces structured error into the weight matrices. Rather than treating this error as noise, we argue it should be viewed as a meaningful prior object that can be approximated and corrected in a principled manner. We propose \textbf{Object-Oriented Low-Rank Initialization (OOLRI)} to efficiently initialize $L_1$ and $L_2$ using quantization error as prior knowledge.

\subsubsection{Quantization Residual Modeling}

We consider the initialization of $L_1$ and $L_2$ as an optimization problem, with the optimization objective being the original weight $\mathbf{W}$. Let $\mathbf{W} \in \mathbb{R}^{m \times n}$ be the full-precision weight matrix, and let $Q(\mathbf{W})$ denote its quantized counterpart. The goal of QA-LoRA is to find a low-rank matrix $L_1 L_2$ that minimizes the quantization loss:
\begin{equation}
    \mathcal{L}(L_1, L_2) = \left\| \mathbf{W} - Q(\mathbf{W} + L_1 L_2) \right\|_F^2.
\end{equation}

Assuming $Q(\cdot)$ is piecewise differentiable or locally smooth, we can  apply a first-order Taylor expansion:
\begin{equation}
    Q(\mathbf{W} + L_1 L_2) \approx Q(\mathbf{W}) + L_1 L_2.
\end{equation}
We then define the residual error form:
\begin{equation}
    \mathbf{E} := \mathbf{W} - Q(\mathbf{W}).
\end{equation}
We can then rewrite the approximate loss:
\begin{equation}
    \mathcal{L}(L_1, L_2) \approx \left\| \mathbf{E} - L_1 L_2 \right\|_F^2.
    \label{eq:approx_loss}
\end{equation}

Hence, the task reduces to computing a low-rank approximation of the quantization residual matrix $\mathbf{E}$.

\begin{figure}[t]
    \centering
    \subfloat[][w/o OOLRI]{
        \includegraphics[width=0.47\linewidth]{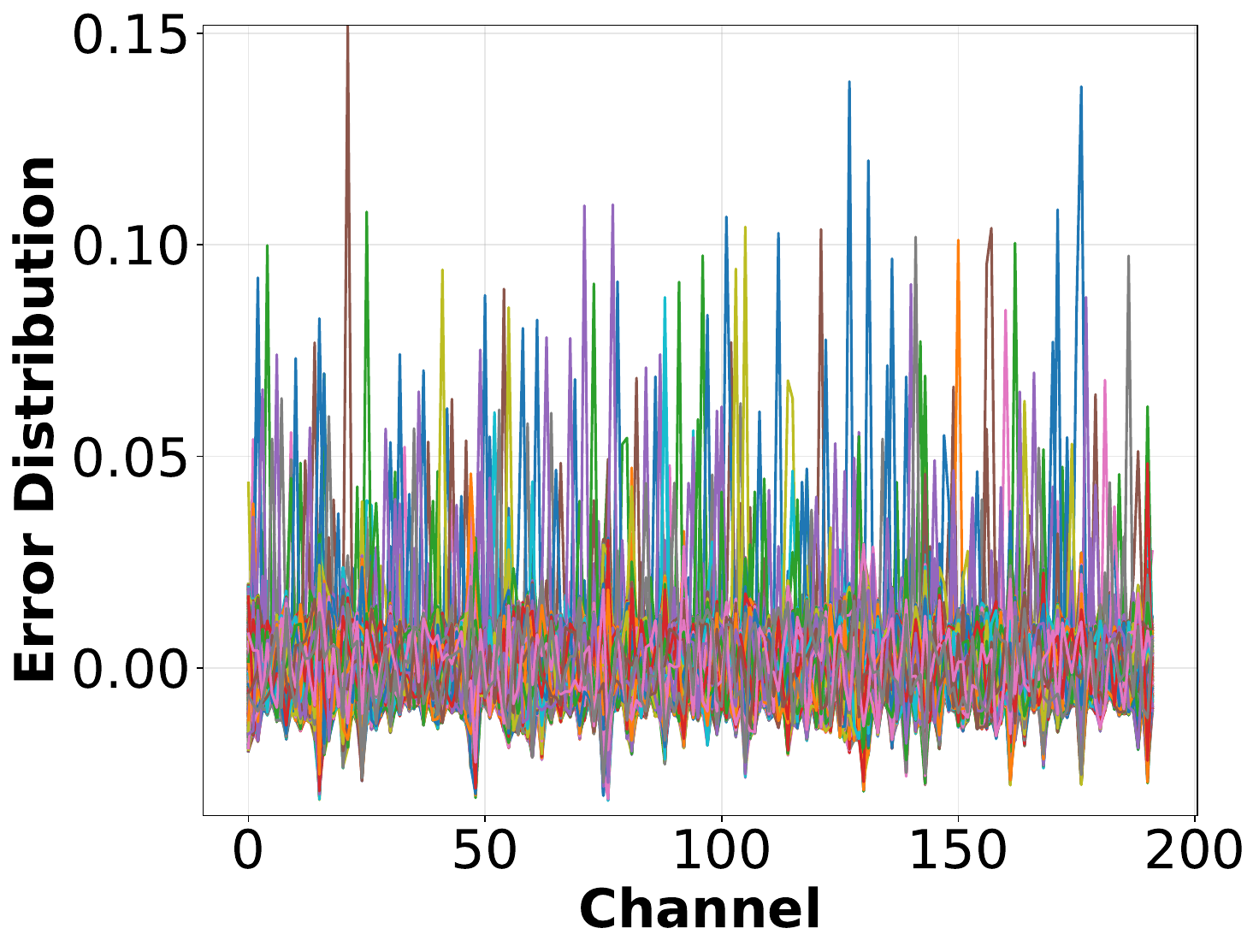}
        \label{}
    }
    \subfloat[][w/ OOLRI]{
        \includegraphics[width=0.47\linewidth]{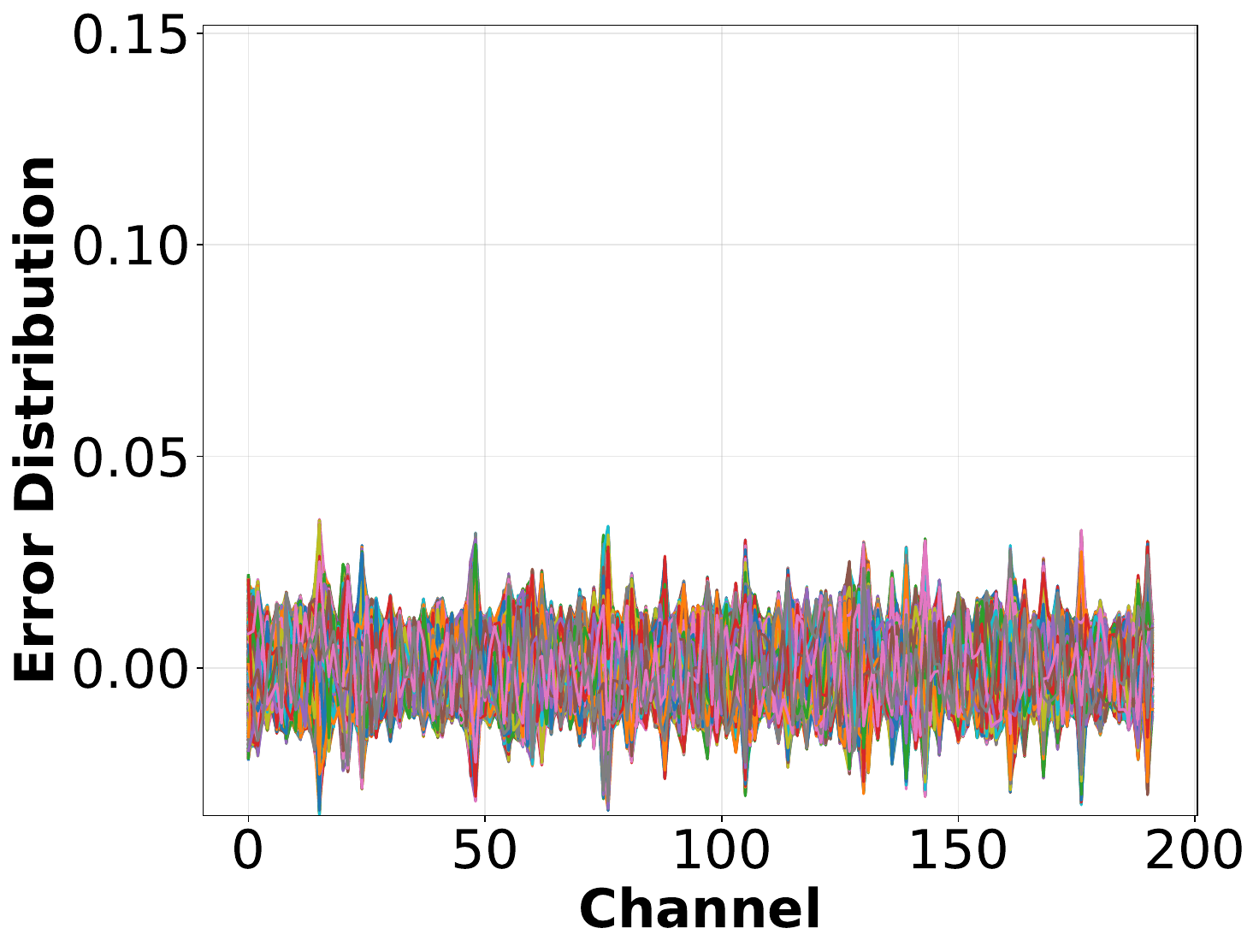}
        \label{}
    }
    \caption{Channel-wise weight error distributions under W3A6 quantization on the LDM-4 model. ``w/o OOLRI'' denotes the zero-initialization without OOLRI technique. OOLRI significantly reduces the overall error for better optimization start.}
    \label{fig:oolri_error}
\end{figure}

\subsubsection{Theoretical Properties of the Modeling Objective}
We claim that our modeling objective Eq.~\eqref{eq:approx_loss} has the following characteristics:
\begin{theorem}
The objective function $f(X) = \left\| \mathbf{E} - X \right\|_F^2$ is convex and $L$-smooth with $L = 2$, where $\mathbf{E} \in \mathbb{R}^{m \times n}$ is the quantization error matrix and $X \in \mathbb{R}^{m \times n}$ is a rank-$r$ matrix.
\label{theorem:smoothness}
\end{theorem}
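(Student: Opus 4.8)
The plan is to treat $f$ as a quadratic function on the ambient space of all $m \times n$ matrices, identified with $\mathbb{R}^{mn}$ via vectorization and equipped with the Frobenius inner product $\langle A, B\rangle = \mathrm{tr}(A^\top B)$. At the outset I would flag one conceptual point: the rank-$r$ qualifier plays no role in the convexity/smoothness argument, and in fact the set of rank-$r$ matrices is not convex. The statement should therefore be read as asserting that $f$ is convex and $2$-smooth as a function on the full linear space $\mathbb{R}^{m\times n}$, which contains every rank-$r$ iterate $L_1 L_2$; global convexity is a property of the ambient quadratic, not of the nonconvex manifold the iterates are constrained to.

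First I would compute the gradient. Expanding $f(X) = \|\mathbf{E}-X\|_F^2 = \langle \mathbf{E}-X,\, \mathbf{E}-X\rangle$ and differentiating with respect to $X$ gives $\nabla f(X) = 2(X - \mathbf{E})$. Because this is an affine map in $X$, the Hessian (with respect to the vectorized variable) is the constant operator $2 I_{mn}$, which already encodes both claims. For convexity, the cleanest route is to observe that $f$ is the composition of the affine map $X \mapsto \mathbf{E} - X$ with the Frobenius-norm-squared $\|\cdot\|_F^2$; the latter is a positive-definite quadratic form and hence convex, and convexity is preserved under affine precomposition. Equivalently, one invokes $2I_{mn} \succeq 0$.

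For $2$-smoothness I would verify the Lipschitz condition on the gradient directly: for any $X_1, X_2$ we have $\nabla f(X_1) - \nabla f(X_2) = 2(X_1 - X_2)$, so $\|\nabla f(X_1) - \nabla f(X_2)\|_F = 2\|X_1 - X_2\|_F$. This is exactly the $L$-smoothness inequality with $L = 2$, and the equality shows the constant is tight rather than merely an upper bound.

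I do not expect any genuine obstacle here, since $f$ is a simple displaced quadratic and each step is a one-line computation. The only point requiring care is conceptual rather than technical, namely making explicit that convexity and smoothness are asserted over the ambient matrix space. Flagging this keeps the statement rigorous and connects cleanly to its later use, where the bound $L=2$ supplies the step-size/convergence guarantees for the gradient-descent-type solution of Eq.~\eqref{eq:approx_loss} underlying the OOLRI initialization.
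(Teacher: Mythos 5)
Your proposal is correct and takes essentially the same route as the paper: the smoothness half is the identical computation (gradient $2(X-\mathbf{E})$, hence $\|\nabla f(X_1)-\nabla f(X_2)\|_F = 2\|X_1-X_2\|_F$), and your convexity argument via affine precomposition with $\|\cdot\|_F^2$ (or the constant Hessian $2I_{mn}\succeq 0$) is exactly what the paper's direct convex-combination expansion does by hand. Your flag that the rank-$r$ qualifier must be read as vacuous here — convexity and smoothness hold on the ambient space $\mathbb{R}^{m\times n}$, not on the nonconvex set of rank-$r$ matrices — is a worthwhile clarification that the paper glosses over, since its own proof silently quantifies over arbitrary $X_1, X_2 \in \mathbb{R}^{m\times n}$.
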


\begin{proof}
\textbf{Convexity:} For any $X_1, X_2 \in \mathbb{R}^{m \times n}$ and $\lambda \in [0, 1]$, we have:
\begin{align*}
f(\lambda X_1 + (1-\lambda)X_2) &= \left\| \mathbf{E} - \lambda X_1 - (1-\lambda)X_2 \right\|_F^2 \\
&= \left\| \lambda(\mathbf{E} - X_1) + (1-\lambda)(\mathbf{E} - X_2) \right\|_F^2 \\
&\leq \lambda \left\| \mathbf{E} - X_1 \right\|_F^2 + (1-\lambda) \left\| \mathbf{E} - X_2 \right\|_F^2 \\
&= \lambda f(X_1) + (1-\lambda) f(X_2),
\end{align*}
where the inequality follows from the convexity of the squared Frobenius norm.

\textbf{Smoothness:} The gradient of $f(X)$ is $\nabla f(X) = -2(\mathbf{E} - X)$. For any $X_1, X_2 \in \mathbb{R}^{m \times n}$, we have:
\begin{align*}
\left\| \nabla f(X_1) - \nabla f(X_2) \right\|_F &= \left\| -2(X_1 - X_2) \right\|_F \\ 
&= 2 \left\| X_1 - X_2 \right\|_F,
\end{align*}
which satisfies the Lipschitz condition with $L = 2$.
\end{proof}

Theorem~\ref{theorem:smoothness} ensures that our modeling objective $f(L_1L_2)=\left\| \mathbf{E} - L_1L_2 \right\|_F^2$ is convex and L2 smooth, which indicates that there is a global optimal solution and is linearly convergent.

\subsubsection{Low-Rank Initialization via Truncated SVD}

According to Eckart-Young Mirsky theorem~\cite{golub1987generalization}, the optimal rank-$r$ approximation of $\mathbf{E}$ of Eq.~\ref{eq:approx_loss} in Frobenius norm is given by its truncated SVD. While Theorem~\ref{theorem:smoothness} ensures that the SVD approximation is optimal. Let the SVD of $\mathbf{E}$ be:
\begin{equation}
    \mathrm{SVD}(\mathbf{W} - Q(\mathbf{W})) = \mathrm{SVD}(\mathbf{E}) = \sum_{i=1}^{\min(m,n)} s_i \mathbf{u}_i \mathbf{v}_i^\top,
\end{equation}
where $\{s_i\}$ are singular values, and $\{\mathbf{u}_i\}, \{\mathbf{v}_i\}$ are the left and right singular vectors, respectively. Then, the best rank-$r$ approximation is:
\begin{equation}
    \hat{\mathbf{E}}_r = \sum_{i=1}^r s_i \mathbf{u}_i \mathbf{v}_i^\top.
\end{equation}

We therefore initialize the low-rank matrices $L_1$ and $L_2$ as:
\begin{equation}
    L_1 = \left[ \sqrt{s_1} \mathbf{u}_1, \ldots, \sqrt{s_r} \mathbf{u}_r \right], \quad
    L_2 = \left[ \sqrt{s_1} \mathbf{v}_1, \ldots, \sqrt{s_r} \mathbf{v}_r \right]^\top.
\label{eq:oolri_init}
\end{equation}

This initialization captures the dominant modes of error introduced by quantization, providing a semantically meaningful warm start for training. We visualize the quantization error for LDM-4 model in Fig~\ref{fig:oolri_error}. Through OOLRI, the quantization error of weights is greatly reduced to a smaller range, achieving better initialization for further optimization.

\subsection{Overall Framework}

\begin{algorithm}[t]
    \caption{The process of MPQ-DMv2 scheme}
    \label{alg:shceme}
    \renewcommand{\algorithmicrequire}{\textbf{Input:}}
    \renewcommand{\algorithmicensure}{\textbf{Output:}}
    
    \begin{algorithmic}[1]
        \REQUIRE calibration data $X$, pre-trained model $M$ with $N$ layers, training iterations $T$.
        \ENSURE quantized network $Q$.
        
        \STATE  Forward propagate $M(X)$ and gather activations.
            
        \FOR{each layer $i \in [1, L]$}
        \STATE Applying FZ-RMQ for layer weight $W_i$ using Eq.~\eqref{eq:quant_search_objective} for mixed precision quantization;
        \STATE Applying OOLRI for layer weight $W_i$ using Eq.~\eqref{eq:oolri_init} for LoRA initialization;
        \ENDFOR
        
        \FOR{all $t \in [1, T]$}
        \STATE Forward propagate $M(X)$;
        \STATE Back propagate using Eq.~\eqref{eq:total_loss} and update $Q$;
        \ENDFOR
        
        \RETURN calibrated quantization model $Q$.
    \end{algorithmic}
\end{algorithm}

We integrate our proposed MPQ-DMv2 scheme into previous work EfficientDM~\cite{he2023efficientdm} and MPQ-DM~\cite{feng2025mpqdm}. For the quantization initialization phase, we use a batch of calibration data to gather the activations corresponding to different weight layers of the model. We then apply \textit{Flexible Z-Order Residual Mixed Quantization} to allocate quantization bits for different channels and optimize strategies using Eq.~\eqref{eq:quant_search_objective}. After the quantization architecture decision, we use \textit{Object-Oriented Low-Rank Initialization} to perform quantization prior injection by using Eq.~\eqref{eq:oolri_init} to initialize the LoRA module. After initialization is completed, we apply \textit{Memory-based Temporal Relation Distillation} to optimize the quantization parameters and LoRA module. The complete pipeline of MPQ-DMv2 is provided in Algorithm~\ref{alg:shceme}.

\section{Experiment}

\begin{table*}[!htb]
    \caption{Class-conditional image generation results of LDM-4 model on ImageNet 256$\times$256. \textbf{``FP'' denotes the full-precision model. ``$^+$'' denotes allocating an additional 10\% channels for 2-bit.} Best results are in \textbf{bold}.}
    \centering
    \setlength{\tabcolsep}{3.2mm}
    \begin{tabular}{clcccccc}
    \hline
        Task & Method & \makecell{Bit (W/A)} & \makecell{Size (MB)} & IS $\uparrow$ & FID $\downarrow$ 
        & sFID $\downarrow$ & \makecell{Precision $\uparrow$ (\%)} \\ \hline
        \multirow{30}{*}{\makecell{ImageNet \\ 256$\times$256 \\ \\ LDM-4 \\ steps=20 \\ eta=0.0 \\ scale=3.0}} & FP~\cite{rombach2022ldm} & 32/32 & 1529.7 & 364.73 & 11.28 & 7.70 & 93.66 \\ \cline{2-8}
        & PTQ-D~\cite{he2024ptqd} & 3/6 & 144.5 & 162.90 & 17.98 & 57.31 & 63.13 \\
        & TFMQ~\cite{huang2024tfmq} & 3/6 & 144.5 & 174.31 & 15.90 & 40.63 & 67.42 \\
        & QuEST~\cite{wang2024quest} & 3/6 & 144.6 & 194.32 & 14.32 & 31.87 & 72.80 \\
        & EfficientDM~\cite{he2023efficientdm} & 3/6 & 144.6 & 299.63 & 7.23 & 8.18 & 86.27 \\
        & HAWQ-V3~\cite{yao2021hawqv3} & 3/6 & 144.6 & 303.79 & 6.94 & 8.01 & 87.76 \\
        & MPQ-DM~\cite{feng2025mpqdm} & 3/6 & 144.6 & 306.33 & 6.67 & 7.93 & 88.65 \\ 
        & \cellcolor[gray]{0.9}\textbf{MPQ-DMv2} & \cellcolor[gray]{0.9}3/6 & \cellcolor[gray]{0.9}144.6 & \cellcolor[gray]{0.9}\textbf{313.38} & \cellcolor[gray]{0.9}\textbf{6.65} & \cellcolor[gray]{0.9}\textbf{7.91} & \cellcolor[gray]{0.9}\textbf{89.74} \\ \cline{2-8}
        & PTQ-D~\cite{he2024ptqd} & 3/4 & 144.5 & 10.86 & 286.57 & 273.16 & 0.02 \\
        & TFMQ~\cite{huang2024tfmq} & 3/4 & 144.5 & 13.08 & 223.51 & 256.32 & 0.04 \\
        & QuEST~\cite{wang2024quest} & 3/4 & 175.5 & 15.22  & 202.44 & 253.64 & 0.04 \\
        & EfficientDM~\cite{he2023efficientdm} & 3/4 & 144.6 & 134.30  & 11.02 & 9.52 & 70.52 \\
        & HAWQ-V3~\cite{yao2021hawqv3} & 3/4 & 144.6 & 152.61 & 8.49 & 9.26 & 75.02 \\
        & MPQ-DM~\cite{feng2025mpqdm} & 3/4 & 144.6 & 197.43 & 6.72 & 9.02 & 81.26 \\ 
        & \cellcolor[gray]{0.9}\textbf{MPQ-DMv2} & \cellcolor[gray]{0.9}3/4 & \cellcolor[gray]{0.9}144.6 & \cellcolor[gray]{0.9}\textbf{197.88} & \cellcolor[gray]{0.9}\textbf{6.60} & \cellcolor[gray]{0.9}\textbf{8.64} & \cellcolor[gray]{0.9}\textbf{81.50} \\ \cline{2-8}
        & PTQ-D~\cite{he2024ptqd} & 2/6 & 96.7 & 70.43 & 40.29 & 35.70 & 43.79 \\
        & TFMQ~\cite{huang2024tfmq} & 2/6 & 96.7 & 77.26 & 36.22 & 33.05 & 45.88 \\
        & QuEST~\cite{wang2024quest} & 2/6 & 96.8 & 86.83 & 32.37 & 31.58 & 47.74 \\
        & EfficientDM~\cite{he2023efficientdm} & 2/6 & 96.8 & 69.64 & 29.15 & 12.94 & 54.70 \\
        & HAWQ-V3~\cite{yao2021hawqv3} & 2/6 & 96.8 & 88.25 & 22.73 & 11.68 & 57.04 \\
        & MPQ-DM~\cite{feng2025mpqdm} & 2/6 & 96.8 & 102.51 & 15.89 & 10.54 & 67.74 \\
        & MPQ-DM$^+$~\cite{feng2025mpqdm} & 2/6 & 101.6 & 136.35 & 11.00 & 9.41 & 72.84 \\
        & \cellcolor[gray]{0.9}MPQ-DMv2 & \cellcolor[gray]{0.9}2/6 & \cellcolor[gray]{0.9}96.8 & \cellcolor[gray]{0.9}116.61 & \cellcolor[gray]{0.9}13.91 & \cellcolor[gray]{0.9}10.50 & \cellcolor[gray]{0.9}69.56 \\
        & \cellcolor[gray]{0.9}\textbf{MPQ-DMv2}$^+$ & \cellcolor[gray]{0.9}2/6 & \cellcolor[gray]{0.9}101.6 & \cellcolor[gray]{0.9}\textbf{136.73} & \cellcolor[gray]{0.9}\textbf{10.96} & \cellcolor[gray]{0.9}\textbf{9.37} & \cellcolor[gray]{0.9}\textbf{73.05} \\ \cline{2-8}
        & PTQ-D~\cite{he2024ptqd} & 2/4 & 96.7 & 9.25 & 336.57 & 288.42 & 0.01 \\
        & TFMQ~\cite{huang2024tfmq} & 2/4 & 96.7 & 12.76 & 300.03 & 272.64 & 0.03 \\
        & QuEST~\cite{wang2024quest} & 2/4 & 127.7 & 14.09 & 285.42 & 270.12 & 0.03 \\
        & EfficientDM~\cite{he2023efficientdm} & 2/4 & 96.8 & 25.20 & 64.45 & 14.99 & 36.63 \\
        & HAWQ-V3~\cite{yao2021hawqv3} & 2/4 & 96.8 & 33.21 & 52.63 & 14.00 & 42.95 \\
        & MPQ-DM~\cite{feng2025mpqdm} & 2/4 & 96.8 & 43.95 & 36.59 & 12.20 & 52.14 \\
        & MPQ-DM$^{+}$~\cite{feng2025mpqdm} & 2/4 & 101.6 & 60.55 & 27.11 & 11.47 & 57.84 \\
        & \cellcolor[gray]{0.9}MPQ-DMv2 & \cellcolor[gray]{0.9}2/4 & \cellcolor[gray]{0.9}96.8 & \cellcolor[gray]{0.9}49.79 & \cellcolor[gray]{0.9}32.55 & \cellcolor[gray]{0.9}12.12 &\cellcolor[gray]{0.9}54.40 \\ 
        & \cellcolor[gray]{0.9}\textbf{MPQ-DMv2}$^{+}$ & \cellcolor[gray]{0.9}2/4 & \cellcolor[gray]{0.9}101.6 & \cellcolor[gray]{0.9}\textbf{60.73} & \cellcolor[gray]{0.9}\textbf{26.98} & \cellcolor[gray]{0.9}\textbf{11.26} & \cellcolor[gray]{0.9}\textbf{58.16} \\ \hline
    \end{tabular}
    \label{tab:imagenet}
\end{table*}

\begin{table*}[!htb]
    \caption{Unconditional image generation results of LDM models
 on LSUN datasets. }
    \centering
    \setlength{\tabcolsep}{2.8mm}
    \begin{tabular}{clcccccc}
    \hline
        Task & Method & \makecell{Bit (W/A)} & \makecell{Size (MB)} & FID $\downarrow$ & sFID $\downarrow$ & \makecell{Precision $\uparrow$  (\%)} \\ \hline
        \multirow{27}{*}{\makecell{LSUN-Bedrooms \\ 256$\times$256 \\ \\ LDM-4 \\ steps=100 \\ eta=1.0}} & FP~\cite{rombach2022ldm} & 32/32 & 1045.4 & 7.39 & 12.18 & 52.04 \\ \cline{2-7}
        & PTQ-D~\cite{he2024ptqd} & 3/6 & 98.3 & 113.42 & 43.85 & 10.06\\
        & TFMQ~\cite{huang2024tfmq} & 3/6 & 98.3 & 26.42 & 30.87 & 38.29 \\
        & QuEST~\cite{wang2024quest} & 3/6 & 98.4 & 21.03 & 28.75 & 40.32 \\
        & EfficientDM~\cite{he2023efficientdm} & 3/6 & 98.4 &   13.37 &   16.14 &   44.55 \\
        & MPQ-DM~\cite{feng2025mpqdm} & 3/6 & 98.4 & 11.58 & 15.44 & 47.13 \\
        & \cellcolor[gray]{0.9}\textbf{MPQ-DMv2} & \cellcolor[gray]{0.9}3/6 & \cellcolor[gray]{0.9}98.4 & \cellcolor[gray]{0.9}\textbf{10.72} & \cellcolor[gray]{0.9}\textbf{15.14} & \cellcolor[gray]{0.9}\textbf{49.15} \\ \cline{2-7}
        & PTQ-D~\cite{he2024ptqd} & 3/4 & 98.3 & 100.07 & 50.29 & 11.64 \\
        & TFMQ~\cite{huang2024tfmq} & 3/4 & 98.3 & 25.74 & 35.18 & 32.20 \\
        & QuEST~\cite{wang2024quest} & 3/4 & 110.4 &   19.08 & 32.75 &   40.64 \\
        & EfficientDM~\cite{he2023efficientdm} & 3/4 & 98.4 & 20.39 &   20.65 & 38.70 \\
        & MPQ-DM~\cite{feng2025mpqdm} & 3/4 & 98.4 & 14.80 & 16.72 & 43.61 \\
        & \cellcolor[gray]{0.9}\textbf{MPQ-DMv2} & \cellcolor[gray]{0.9}3/4 & \cellcolor[gray]{0.9}98.4 & \cellcolor[gray]{0.9}\textbf{14.64} & \cellcolor[gray]{0.9}\textbf{16.70} & \cellcolor[gray]{0.9}\textbf{44.50} \\ \cline{2-7}
        & PTQ-D~\cite{he2024ptqd} & 2/6 & 65.7 & 86.65 & 53.52 & 10.27 \\
        & TFMQ~\cite{huang2024tfmq} & 2/6 & 65.7 & 28.72 & 29.02 & 34.57 \\
        & QuEST~\cite{wang2024quest} & 2/6 & 65.7 & 29.64 & 29.73 & 34.55 \\
        & EfficientDM~\cite{he2023efficientdm} & 2/6 & 65.7 & 25.07 & 22.17 & 34.59 \\
        & MPQ-DM~\cite{feng2025mpqdm} & 2/6 & 65.7 & 17.12 & 19.06 & 40.90 \\
        & MPQ-DM$^+$~\cite{feng2025mpqdm} & 2/6 & 68.9 & 16.54 & 18.36 & 41.80 \\
        & \cellcolor[gray]{0.9}MPQ-DMv2 & \cellcolor[gray]{0.9}2/6 & \cellcolor[gray]{0.9}65.7 & \cellcolor[gray]{0.9}17.09 & \cellcolor[gray]{0.9}18.58 & \cellcolor[gray]{0.9}41.81 \\
        & \cellcolor[gray]{0.9}\textbf{MPQ-DMv2}$^+$ & \cellcolor[gray]{0.9}2/6 & \cellcolor[gray]{0.9}68.9 & \cellcolor[gray]{0.9}\textbf{16.18} & \cellcolor[gray]{0.9}\textbf{18.32} & \cellcolor[gray]{0.9}\textbf{44.13} \\ \cline{2-7}
        & PTQ-D~\cite{he2024ptqd} & 2/4 & 65.7 & 147.25 & 49.97 & 9.26 \\
        & TFMQ~\cite{huang2024tfmq} & 2/4 & 65.7 & 25.77 & 36.74 & 32.86 \\
        & QuEST~\cite{wang2024quest} & 2/4 & 77.7 & 24.92 & 36.33 & 32.82 \\
        & EfficientDM~\cite{he2023efficientdm} & 2/4 & 65.7 & 33.09 & 25.54 & 28.42 \\
        & MPQ-DM~\cite{feng2025mpqdm} & 2/4 & 65.7 & 21.69 & 21.58 & 38.69 \\
        & MPQ-DM$^+$~\cite{feng2025mpqdm} & 2/4 & 68.9 & 20.28 & 19.42 & 38.92 \\
        & \cellcolor[gray]{0.9}MPQ-DMv2 & \cellcolor[gray]{0.9}2/4 & \cellcolor[gray]{0.9}65.7 & \cellcolor[gray]{0.9}21.65 & \cellcolor[gray]{0.9}19.95 & \cellcolor[gray]{0.9}38.70 \\
        & \cellcolor[gray]{0.9}\textbf{MPQ-DMv2}$^+$ & \cellcolor[gray]{0.9}2/4 & \cellcolor[gray]{0.9}68.9 & \cellcolor[gray]{0.9}\textbf{20.15} & \cellcolor[gray]{0.9}\textbf{19.37} & \cellcolor[gray]{0.9}\textbf{39.91} \\ \hline
        \multirow{27}{*}{\makecell{LSUN-Churches \\ 256$\times$256 \\ \\ LDM-8 \\ steps=100 \\ eta=0.0}} & FP~\cite{rombach2022ldm} & 32/32 & 1125.2 & 5.55 & 10.75 & 67.43 \\ \cline{2-7}
        & PTQ-D~\cite{he2024ptqd} & 3/6 & 106.0 & 59.43 & 40.26 & 13.37 \\
        & TFMQ~\cite{huang2024tfmq} & 3/6 & 106.0 & 13.53 & 22.10 & 62.74 \\
        & QuEST~\cite{wang2024quest} & 3/6 & 106.1 & 22.19 & 32.79 & 60.73 \\
        & EfficientDM~\cite{he2023efficientdm} & 3/6 & 106.1 &   9.53 &   13.70 &   62.92 \\
        & MPQ-DM~\cite{feng2025mpqdm} & 3/6 & 106.1 & 9.28 & 13.37 & 63.73 \\
        & \cellcolor[gray]{0.9}\textbf{MPQ-DMv2} & \cellcolor[gray]{0.9}3/6 & \cellcolor[gray]{0.9}106.1 & \cellcolor[gray]{0.9}\textbf{8.65} & \cellcolor[gray]{0.9}\textbf{12.88} & \cellcolor[gray]{0.9}\textbf{64.08} \\ \cline{2-7}
        & PTQ-D~\cite{he2024ptqd} & 3/4 & 106.0 & 77.08 & 49.63 & 10.25 \\
        & TFMQ~\cite{huang2024tfmq} & 3/4 & 106.0 & 35.51 & 48.59 & 55.32 \\
        & QuEST~\cite{wang2024quest} & 3/4 & 122.4 & 40.74 & 53.63 & 52.78 \\
        & EfficientDM~\cite{he2023efficientdm} & 3/4 & 106.1 &   15.59 &   18.16 &   57.92 \\
        & MPQ-DM~\cite{feng2025mpqdm} & 3/4 & 106.1 & 14.08 & 16.91 & 59.68 \\
        & \cellcolor[gray]{0.9}\textbf{MPQ-DMv2} & \cellcolor[gray]{0.9}3/4 & \cellcolor[gray]{0.9}106.1 & \cellcolor[gray]{0.9}\textbf{10.58} & \cellcolor[gray]{0.9}\textbf{14.60} & \cellcolor[gray]{0.9}\textbf{62.87} \\ \cline{2-7}
        & PTQ-D~\cite{he2024ptqd} & 2/6 & 70.9 & 63.38 & 46.63 & 12.14 \\
        & TFMQ~\cite{huang2024tfmq} & 2/6 & 70.9 & 25.51 & 35.83 & 54.75 \\
        & QuEST~\cite{wang2024quest} & 2/6 & 70.9 & 23.03 & 35.13 & 56.90 \\
        & EfficientDM~\cite{he2023efficientdm} & 2/6 & 70.9 & 16.98 & 18.18 & 57.39 \\
        & MPQ-DM~\cite{feng2025mpqdm} & 2/6 & 70.9 & 15.61 & 17.44 & 59.03 \\
        & MPQ-DM$^+$~\cite{feng2025mpqdm} & 2/6 & 74.4 & 13.38 & 15.59 & 61.00 \\
        & \cellcolor[gray]{0.9}MPQ-DMv2 & \cellcolor[gray]{0.9}2/6 & \cellcolor[gray]{0.9}70.9 & \cellcolor[gray]{0.9}12.83 & \cellcolor[gray]{0.9}15.76 & \cellcolor[gray]{0.9}60.88 \\
        & \cellcolor[gray]{0.9}\textbf{MPQ-DMv2}$^+$ & \cellcolor[gray]{0.9}2/6 & \cellcolor[gray]{0.9}74.4 & \cellcolor[gray]{0.9}\textbf{12.23} & \cellcolor[gray]{0.9}\textbf{15.56} & \cellcolor[gray]{0.9}\textbf{61.91} \\ \cline{2-7}
        & PTQ-D~\cite{he2024ptqd} & 2/4 & 70.9 & 81.95 & 50.66 & 9.47 \\
        & TFMQ~\cite{huang2024tfmq} & 2/4 & 70.9 & 51.44 & 64.07 & 42.25 \\
        & QuEST~\cite{wang2024quest} & 2/4 & 86.9 & 50.53 & 63.33 & 45.86 \\
        & EfficientDM~\cite{he2023efficientdm} & 2/4 & 70.9 & 22.74 & 22.55 & 53.00 \\
        & MPQ-DM~\cite{feng2025mpqdm} & 2/4 & 70.9 & 21.83 & 21.38 & 53.99 \\
        & MPQ-DM$^+$~\cite{feng2025mpqdm} & 2/4 & 74.4 & 16.91 & 18.57 & 58.04 \\
        & \cellcolor[gray]{0.9}MPQ-DMv2 & \cellcolor[gray]{0.9}2/4 & \cellcolor[gray]{0.9}70.9 & \cellcolor[gray]{0.9}15.50 & \cellcolor[gray]{0.9}\textbf{17.81} & \cellcolor[gray]{0.9}59.95 \\
        & \cellcolor[gray]{0.9}\textbf{MPQ-DMv2}$^+$ & \cellcolor[gray]{0.9}2/4 & \cellcolor[gray]{0.9}74.4 & \cellcolor[gray]{0.9}\textbf{15.20} & \cellcolor[gray]{0.9}18.02 & \cellcolor[gray]{0.9}\textbf{60.39} \\ \hline
    \end{tabular}
    \label{tab:lsun_bedroom}
\end{table*}

\subsection{Models and Datasets}
We perform comprehensive experiments that include unconditional image generation, class-conditional image generation, and text-conditional image generation tasks on two Unet-based diffusion models: latent-space diffusion model (LDM) and Stable Diffusion v1.4~\cite{rombach2022ldm}. For LDM, our investigations spanned multiple datasets, including LSUN-Bedrooms, LSUN-Churches~\cite{yu2015lsun}, and ImageNet~\cite{deng2009imagenet}, all with a resolution of 256×256. Furthermore, we employ Stable Diffusion for text-conditional image generation on randomly sampled 10k COCO2014~\cite{lin2014mscoco} validation set prompts with a resolution of 512×512. We further extend the experiment on Diffusion Transformer (DiT) architecture. We use DiT-XL/2 model~\cite{peebles2023dit} on ImageNet dataset under 256 and 512 resolutions for evaluation. This diverse set of experiments, conducted on different models, datasets, and tasks, allows us to validate the effectiveness of our MPQ-DMv2 comprehensively. 

\subsection{Implementation Details}
Same with previous work~\cite{feng2025mpqdm}, we allocate an additional 10\% number of channels for 2-bit during the search process of FZ-RMQ, named MPQ-DMv2$^+$. This results in only a 0.6\% increase in model size compared with FP model. We compare our MPQ-DMv2 with baseline method EfficientDM~\cite{he2023efficientdm}, layer-wise mixed precision HAWQ-v3 \cite{yao2021hawqv3}, channel-wise mixed precision MPQ-DM~\cite{feng2025mpqdm}, and other PTQ-based methods PTQ-D\cite{he2024ptqd}, TFMQ~\cite{huang2024tfmq}, QuEST~\cite{wang2024quest} which possess similar time consumption. For DiT models, we compare with baseline method PTQ4DiT~\cite{wu2024ptq4dit}. We do not report the results of Q-Diffusion~\cite{li2023qdiffusion} and PTQ4DM~\cite{shang2023ptq4dm} as they face severe performance degradation under low-bit quantization, and totally cannot generate meaningful images as previous works~\cite{he2024ptqd, he2023efficientdm} reported. We follow previous works~\cite{he2023efficientdm, feng2025mpqdm} to perform quantization-aware low-rank fine-tuning for quantization diffusion models. For LDM models training process, we fine-tune LoRA weights and quantization parameters for 16K iterations with a batchsize of 4. For Stable Diffusion training process, we fine-tune parameters for 30K iterations with a batchsize of 2. For Dit-XL/2 models, we fine-tune 40K iterations with a batchsize of 32 for 256 resolution and 80k iterations with a batchsize of 8 for 512 resolution. All the learning rates and calibration data are consistent with existing work MPQ-DM~\cite{feng2025mpqdm}. For MTRD, we set $L=20k$ and $k=1024$ for ImageNet dataset; $L=4K$ and $k=204$ for LSUN datasets and COCO dataset.

\subsection{Evaluation Details}
For the quantization experiment bit setting selection, we fully follow and cover the existing work MPQ-DM in the experiment models and bit settings. In addition, we conducted additional experiments on Stable Diffusion with extra bit setting and on new DiT models. We use IS~\cite{salimans2016is_metric}, FID \cite{heusel2017fid}, sFID \cite{nash2021sfid}, and Precision to evaluate LDM and DiT performance. We only report IS for the ImageNet dataset. Because the Inception Score is not a reasonable metric for datasets that have significantly different domains and categories from ImageNet. These metrics are all evaluated using ADM’s TensorFlow evaluation suite. For Stable Diffusion, we use CLIP Score \cite{hessel2021clipscore} for evaluation. We sample 50k samples for LDM model, 10k samples for Stable Diffusion model and DiT-XL/2 256$\times$256 model, 5k samples for DiT-XL/2 512$\times$512 model, following prior works~\cite{he2023efficientdm, wang2024quest, feng2025mpqdm, wu2024ptq4dit}.

\subsection{Experiment Results}

\textbf{Class-conditional Generation.}
We conduct class-conditional generation experiment on ImageNet 256$\times$256 dataset~\cite{deng2009imagenet}, focusing on LDM-4~\cite{rombach2022ldm}. We present the results in Table~\ref{tab:imagenet}. Our MPQ-DMv2 outperforms existing methods in all extreme-low bit settings, demonstrating the effectiveness and generalization of our approach. Compared with the main baseline method MPQ-DM, our MPQ-DMv2 has improved to varying degrees in all quantization settings, indicating that the improvements our MPQ-DMv2 are effective for low-bit quantization. Under the W2A6 setting, our MPQ-DMv2 achieved FID surpassing FP method for the first time and improved IS by more than 10. In the most difficult W2A4 setting, MPQ-DMv2 achieved significant improvement, reducing FID from 36.59 to 32.55, demonstrating the potential of our method in extremely low bit settings. For MPQ-DMv2$^+$, which is designed to handle extreme settings of 2 bits, it achieved an almost 6 decrease in FID and only increased the parameter count by less than 5MB.

\begin{table*}[!htp]
    \centering
    \caption{Class-conditional Image generation results of DiT-XL/2 models on ImageNet datasets. 
    } 
    \setlength{\tabcolsep}{2.8mm}
    \begin{tabular}{clcccccc}
    \hline
        Task & Method & \makecell{Bit (W/A)} & \makecell{Size (MB)} & IS $\uparrow$ & FID $\downarrow$ 
        & sFID $\downarrow$ & \makecell{Precision $\uparrow$ (\%)} \\ \hline 
        \multirow{10}{*}{\makecell{ImageNet \\ 256$\times$256 \\ \\ DiT-XL/2 \\ 50steps \\ scale=1.5}} & FP~\cite{peebles2023dit} & 32/32 & 2575.42 & 246.24 & 6.02 & 21.77 & 78.12 \\ \cline{2-8}
        & PTQ4DiT~\cite{wu2024ptq4dit} & 4/6 & 323.8 & 66.32 & 25.70 & 37.05 & 59.45 \\
        & MPQ-DM~\cite{feng2025mpqdm}  & 4/6 & 323.8 & 88.54 & 19.98 & 32.18 & 71.60 \\
        & \cellcolor[gray]{0.9}\textbf{MPQ-DMv2} & \cellcolor[gray]{0.9}4/6 & \cellcolor[gray]{0.9}323.8 & \cellcolor[gray]{0.9}\textbf{97.03} & \cellcolor[gray]{0.9}\textbf{17.35} & \cellcolor[gray]{0.9}\textbf{29.34} & \cellcolor[gray]{0.9}\textbf{72.51} \\ \cline{2-8}
        & PTQ4DiT~\cite{wu2024ptq4dit} & 3/8 & 243.3 & 9.07 & 130.34 & 64.88 & 12.40 \\
        & MPQ-DM~\cite{feng2025mpqdm}  & 3/8 & 243.3 & 54.73 & 39.64 & 41.27 & 60.88 \\
        & \cellcolor[gray]{0.9}\textbf{MPQ-DMv2} & \cellcolor[gray]{0.9}3/8 & \cellcolor[gray]{0.9}243.3 & \cellcolor[gray]{0.9}\textbf{60.20} & \cellcolor[gray]{0.9}\textbf{35.64} & \cellcolor[gray]{0.9}\textbf{38.62} & \cellcolor[gray]{0.9}\textbf{62.58} \\ \cline{2-8}
        & PTQ4DiT~\cite{wu2024ptq4dit} & 3/6 & 243.3 & 6.79 & 150.74 & 68.94 & 12.28 \\
        & MPQ-DM~\cite{feng2025mpqdm}  & 3/6 & 243.3 & 44.02 & 48.69 & 55.80 & 55.34 \\
        & \cellcolor[gray]{0.9}\textbf{MPQ-DMv2} & \cellcolor[gray]{0.9}3/6 & \cellcolor[gray]{0.9}243.3 & \cellcolor[gray]{0.9}\textbf{46.34} & \cellcolor[gray]{0.9}\textbf{45.26}& \cellcolor[gray]{0.9}\textbf{49.83} & \cellcolor[gray]{0.9}\textbf{56.84} \\ \hline
        \multirow{7}{*}{\makecell{ImageNet \\ 512$\times$512 \\ \\ DiT-XL/2 \\ 50steps \\ scale=1.5}} & FP~\cite{peebles2023dit} & 32/32 & 2575.42 & 213.86 & 11.28 & 41.70 & 81.00 \\ \cline{2-8}
        & PTQ4DiT~\cite{wu2024ptq4dit} & 4/6 & 323.8 & 7.30 & 169.89 & 68.61 & 7.16 \\
        & MPQ-DM~\cite{feng2025mpqdm}  & 4/6 & 323.8 & 58.91 & 44.68 & 56.13 & 60.88 \\
        & \cellcolor[gray]{0.9}\textbf{MPQ-DMv2} & \cellcolor[gray]{0.9}4/6 & \cellcolor[gray]{0.9}323.8 & \cellcolor[gray]{0.9}\textbf{62.68} & \cellcolor[gray]{0.9}\textbf{42.88} & \cellcolor[gray]{0.9}\textbf{50.45} & \cellcolor[gray]{0.9}\textbf{60.88} \\ \cline{2-8}
        & PTQ4DiT~\cite{wu2024ptq4dit} & 3/8 & 243.3 & 4.45 & 247.55 & 123.70 & 3.90 \\
        & MPQ-DM~\cite{feng2025mpqdm} & 3/8 & 243.3 & 38.21 & 60.96 & 58.39 & 49.44 \\
        & \cellcolor[gray]{0.9}\textbf{MPQ-DMv2} & \cellcolor[gray]{0.9}3/8 & \cellcolor[gray]{0.9}243.3 & \cellcolor[gray]{0.9}\textbf{45.49} & \cellcolor[gray]{0.9}\textbf{53.63} & \cellcolor[gray]{0.9}\textbf{53.64} & \cellcolor[gray]{0.9}\textbf{54.32} \\ \hline
    \end{tabular}
    \label{tab:dit}
\end{table*}

\textbf{Unconditional Generation.}
We conduct unconditional generation experiment on LSUN-Bedrooms dataset over LDM-4 and LSUN-Churches dataset~\cite{yu2015lsun} over LDM-8~\cite{rombach2022ldm} with 256$\times$256 resolution. We present all the experiment results in Table~\ref{tab:lsun_bedroom}. MPQ-DMv2 still outperforms all other existing methods under all bit settings. For LSUN-Churches dataset, MPQ-DMv2 has made significant progress in almost all settings. Under W3A4 setting, the FID was reduced from 14.08 to 10.58, achieving a nearly 30\% decrease. Under W2A6 setting, sFID was reduced from 17.44 to 15.76, and the improvement even exceeded the improvement of EfficientDM by MPQ-DM. Under the most challenging W2A4 setting, we further reduced the FID from 21.83 to 15.50, achieving a FID decrease of over 6. It is worth mentioning that our MPQ-DMv2 even surpasses the performance of MPQ-DM$^+$ under W2A4 setting, further demonstrating the significant performance improvement of our method at ultra-low bit widths.

\textbf{Text-to-image Generation.}
To verify the performance on more detailed and difficult text to image generation task, we conduct experiment on Stable Diffusion model~\cite{rombach2022ldm} with 512$\times$512 resolution. We use randomly selected 10k COCO2014 validation set prompts~\cite{lin2014mscoco} to validate the model's text-to-image capabilities. We present the evaluation results in Table~\ref{tab:sd}. In high-resolution text-to-image generation tasks, MPQ-DMv2 still outperforms all existing methods in various bit settings. In W3A6 and W2A6 settings, we achieve over 0.3 CLIP Score improvement. Under the W3A4 setting, we even achieved a score improvement of over 1.3, demonstrating the effectiveness of our method.

\textbf{DiT architecture experiment.}
To further validate the generality and effectiveness of our MPQ-DMv2 on different diffusion model architectures, we conducted comprehensive experiments on the advanced DiT-XL/2 model~\cite{peebles2023dit}. To verify the generality of the method, we conducted experiments at two different high resolutions, ImageNet 256$\times$266 and 512$\times$512~\cite{deng2009imagenet}. For the DiT architecture, we chose PTQ4DiT~\cite{wu2024ptq4dit} as the baseline method for comparison and comprehensively compared the original MPQ-DM~\cite{feng2025mpqdm} method as the state-of-the-art comparison method. We prensent the evaluation results in Tab.~\ref{tab:dit}. It can be seen that under various low-bit quantization settings, the mixed precision quantization framework MPQ-DM is significantly better than PTQ4DiT, and our improved MPQ-DMv2 has achieved further improvement and SOTA under various settings. Under the W3A8 setting with a resolution of 256, we reduced the FID from 39.64 to 35.64 and significantly surpassed the 130.34 of the PTQ4DiT method. Under the more challenging 512 resolution W3A8 setting, we reduced the FID from 60.96 to 53.63, achieving a FID reduction of over 7 and significantly better than PTQ4DiT's 247.55. This reflects the significant improvement of MPQ-DMv2 over the original method MPQ-DM, and is significantly better than the existing baseline method PTQ4DiT at low-bit widths.

\begin{table}[!htp]
    \centering
    \caption{Text-to-image generation results of Stable Diffusion v1.4 using 10k COCO2014 validation set prompts. 
    } 
    \setlength{\tabcolsep}{1.6mm}
    \begin{tabular}{clccc}
    \hline
        Task & Method & \makecell{Bit \\ (W/A)} & \makecell{Size \\ (MB)} & CLIP Score $\uparrow$ \\ \hline
        \multirow{15}{*}{\makecell{MS-COCO \\ 512$\times$512 \\ \\ Stable-Diffusion \\ steps=50 \\ eta=0.0 \\ scale=7.5}}& FP~\cite{rombach2022ldm} & 32/32 & 3279.1 & 31.25 \\ \cline{2-5}
        & QuEST~\cite{wang2024quest} & 4/6 & 410.4 & 30.16 \\
        & EfficientDM~\cite{he2023efficientdm} & 4/6 & 410.4 & 30.24 \\
        & MPQ-DM~\cite{feng2025mpqdm} & 4/6 & 410.4 & 30.59 \\
        & \cellcolor[gray]{0.9}\textbf{MPQ-DMv2} & \cellcolor[gray]{0.9}4/6 & \cellcolor[gray]{0.9}410.4 & \cellcolor[gray]{0.9}\textbf{30.71} \\ \cline{2-5}
        & QuEST~\cite{wang2024quest} & 3/6 & 309.8 & 28.76 \\
        & EfficientDM~\cite{he2023efficientdm} & 3/6 & 309.8 & 29.12 \\
        & MPQ-DM~\cite{feng2025mpqdm} & 3/6 & 309.8 & 29.63 \\
        & \cellcolor[gray]{0.9}\textbf{MPQ-DMv2} & \cellcolor[gray]{0.9}3/6 & \cellcolor[gray]{0.9}309.8 & \cellcolor[gray]{0.9}\textbf{29.95} \\ \cline{2-5}
        & QuEST~\cite{wang2024quest} & 3/4 & 332.9 & 26.55 \\
        & EfficientDM~\cite{he2023efficientdm} & 3/4 & 309.8 & 26.63 \\
        & MPQ-DM~\cite{feng2025mpqdm} & 3/4 & 309.8 & 26.96 \\
        & \cellcolor[gray]{0.9}\textbf{MPQ-DMv2} & \cellcolor[gray]{0.9}3/4 & \cellcolor[gray]{0.9}309.8 & \cellcolor[gray]{0.9}\textbf{28.28} \\ \cline{2-5}
        & QuEST~\cite{wang2024quest} & 2/6 & 207.4 & 22.88 \\
        & EfficientDM~\cite{he2023efficientdm} & 2/6 & 207.4 & 22.94 \\
        & MPQ-DM~\cite{feng2025mpqdm} & 2/6 & 207.4 & 23.23 \\
        & MPQ-DM$^+$~\cite{feng2025mpqdm} & 2/6 & 217.6 & 25.02 \\
        & \cellcolor[gray]{0.9}MPQ-DMv2 & \cellcolor[gray]{0.9}2/6 & \cellcolor[gray]{0.9}207.4 & \cellcolor[gray]{0.9}23.96 \\
        & \cellcolor[gray]{0.9}\textbf{MPQ-DMv2}$^+$ & \cellcolor[gray]{0.9}2/6 & \cellcolor[gray]{0.9}217.6 & \cellcolor[gray]{0.9}\textbf{25.06} \\ \hline
    \end{tabular}
    \label{tab:sd}
\end{table}

\subsection{Visual Comparison}

\textbf{Comparison on ImageNet datasets.} We present visual comparison results about LDM-4 ImageNet 256$\times$256 model in Fig.~\ref{fig:visual_imagenet} under W3A6 quantization setting. We mainly compare our MPQ-DMv2 with the current SOTA methods MPQ-DM and EfficientDM. For a comprehensive comparison, we visualized images of different categories. It can be seen that compared to existing methods, our MPQ-DMv2 has better image quality and details, and the overall image is more similar to the image of the full-precision (FP) model. For example, for the pizza image in the first column, our MPQ-DMv2 is significantly better in color and more detailed in material characterization on top of the pizza.

\textbf{Comparison on Text-to-Image tasks.} We further present visual comparison results about Stable Diffusion model of text-to-image generation in Fig.~\ref{fig:visual_sd} under W4A6 quantization setting. We visualized the results of three different complex scenarios for prompt generation. Compared with existing methods, our MPQ-DMv2 has significantly improved image quality and is more similar in style and content to the images generated by the FP model. For the first row, the house details and smoke wheels generated by MPQ-DMv2 are more similar to the FP model. For the second line, the castle generated by MPQ-DMv2 has a style that is significantly closer to the FP model. As a comparison, the image quality generated by other methods has significantly decreased. For the third line, the towers generated by other methods have serious content discrepancies with the FP model, while MPQ-DMv2 has a clear preservation of the original style.

\begin{figure*}[t]
    \centering
    \includegraphics[width=1.0\linewidth]{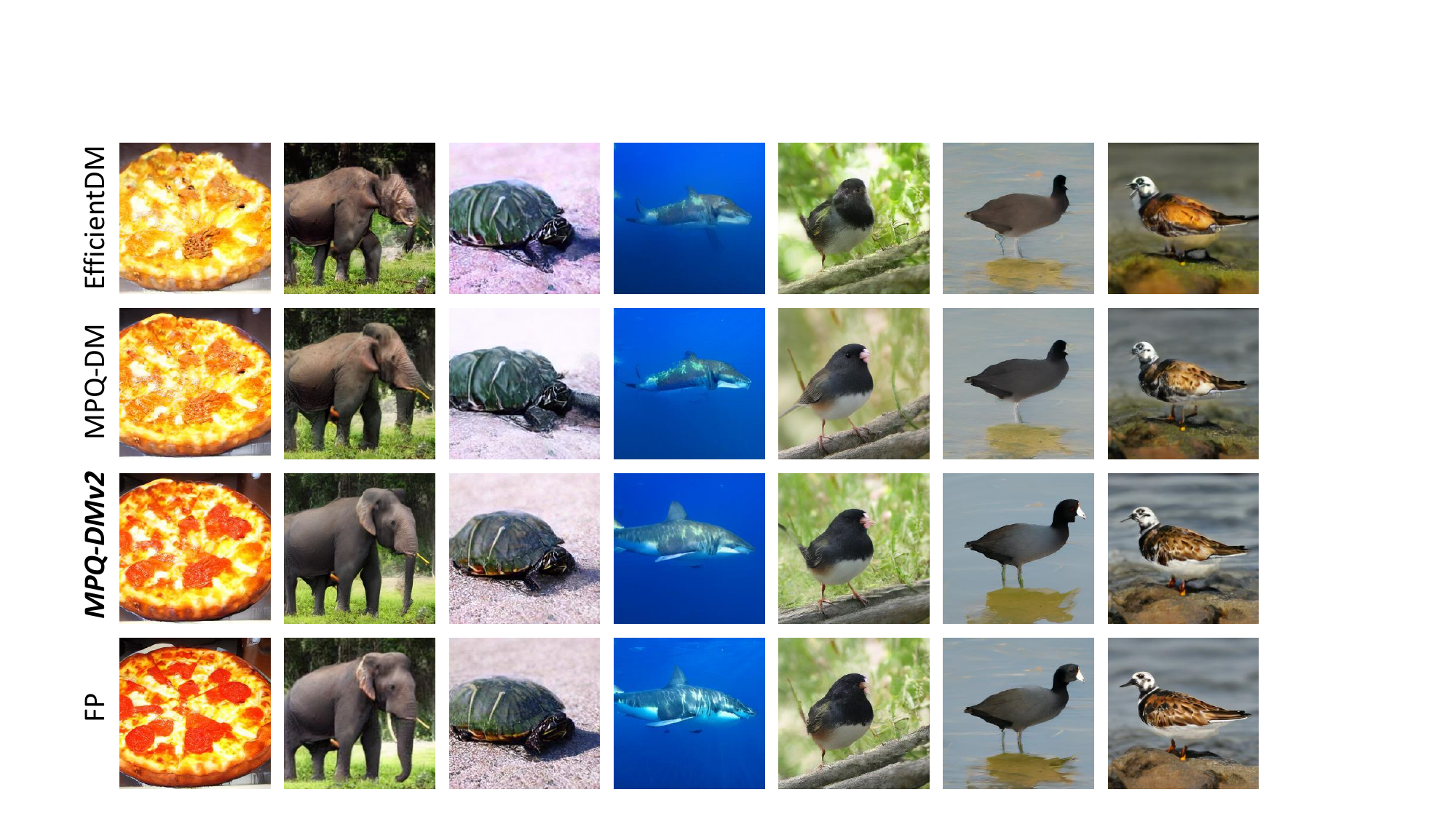}
    \caption{Visual comparison of different methods under W3A4 quantization setting on ImageNet 256$\times$256 LDM-4 model. Results of Full-Precision (FP) model are presented for better comparison.}
    \label{fig:visual_imagenet}
\end{figure*}

\begin{figure}
    \centering
    \includegraphics[width=1.0\linewidth]{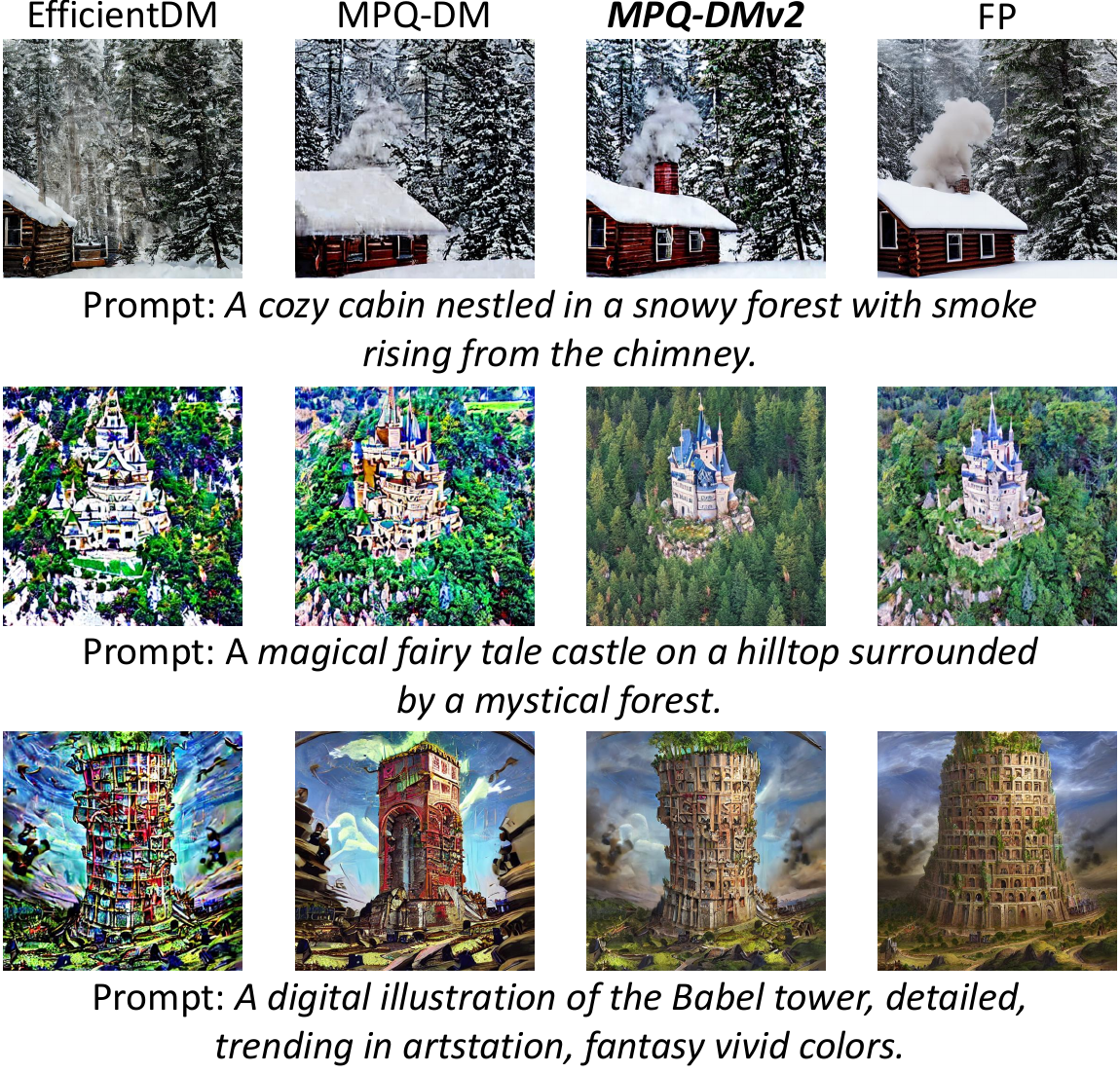}
    \caption{Visual comparison of different methods under W4A6 quantization setting on Stable Diffusion model.}
    \label{fig:visual_sd}
\end{figure}

\subsection{Ablation Study}
\textbf{Component Study.} In Table~\ref{tab:ablation_method}, we perform comprehensive ablation studies on LDM-4 ImageNet 256$\times$256 model to evaluate the effectiveness of each proposed component. Our proposed FZRMQ provides a flexible residual quantizer that is more suitable for low-bit quantization, making the quantization operation more suitable for outlier distributions in diffusion models, and optimizing the FID from 36.59 to 33.58. Furthermore, our optimization method MTRD preserves temporal information during the distillation process, ensuring the consistency of the overall denoising trajectory and further reducing FID to 33.02. Finally, we propose OOLRI to efficiently initialize the LORA module in the optimization framework using prior quantization error information. By combining the above three orthogonal techniques, our MPQ-DMv2 reduced the FID to 32.55 and achieved optimal performance.

\begin{table}[h]
    \centering
    \caption{Ablation study on proposed methods. The experiments are conducted on LDM-4 ImageNet 256$\times$256 under W2A4 quantization setting.}
    \setlength{\tabcolsep}{1.4mm}
    \begin{tabular}{lccccc}
    \hline
         Method & \makecell{Bit \\ (W/A)} & IS $\uparrow$ & FID $\downarrow$ & sFID $\downarrow$ & Precision $\uparrow$ \\ \hline
         EfficientDM~\cite{he2023efficientdm} & 2/4 & 25.20 & 64.45 & 14.99 & 36.63 \\ \hline
         MPQ-DM~\cite{feng2025mpqdm} (Baseline) & 2/4 & 43.95 & 36.59 & 12.20 & 52.14 \\ \hline 
         +FZRMQ (w/o FOS) & 2/4 & 45.28 & 34.95 & 12.70 & 52.83 \\
         +FZRMQ & 2/4 & 48.98 & 33.58 & 12.82 & 53.51 \\
         +MTRD (MSE) & 2/4 & 49.15 & 33.26 & 12.29 & 53.25 \\
         +MTRD (KL) & 2/4 & 49.33 & 33.02 & 12.14 & 53.74 \\
         \rowcolor[gray]{0.9}
         +OOLRI (\textbf{MPQ-DMv2}) & 2/4 & \textbf{49.79} & \textbf{32.55} & \textbf{12.12} & \textbf{54.40}\\ \hline
    \end{tabular}
    \label{tab:ablation_method}
\end{table}

\textbf{Ablation Study on Optimization Strategies.}
We further conduct experiments on the proposed \textit{Flexible Optimization Strategies} (FOS) mentioned in Sec.~\ref{subsec:quant_optimization} on Tab.~\ref{tab:ablation_method}. FOS aims to dynamically preserve the powerful expressive ability of high-bit uniform quantizers, aiming to achieve a balance between flexible handling of outliers and preserving uniform expressive power. It is worth mentioning that FOS only requires one additional search, without any additional training or inference overhead. Compared to not using FOS, the full version of FZRMQ reduced FID from 34.95 to 33.58, demonstrating the effectiveness of FOS.

\textbf{Ablation on Loss Metrics.}
In Table~\ref{tab:ablation_method}, we further present different distillation metrics used in Eq.~\eqref{eq:KL_loss} on Tab.~\ref{tab:ablation_method}. We mainly compare the used Kullback-Leibler (KL) divergence with Mean Squared Error (MSE) metrics. Compared to directly calculating the difference between each element using MSE, KL divergence can comprehensively consider the similarity between distributions~\cite{feng2024rdd, yang2022cirkd}. And our MTRD aims to distill the temporal information from the entire denoising trajectory, which is naturally a type of time distribution information. Therefore, we choose to use KL divergence in our implementation. The experiment results also proved that KL divergence can reduce FID from 33.26 to 33.02 compared to MSE, and also demonstrated its adaptability to our distillation target.

\begin{table}[h]
    \centering
    \caption{Ablation Study on Distillation Weight Factor. Experiment conducted on LDM-4 ImageNet 256$\times$256 model under W2A4 quantization settings. \textbf{``-'' denotes without distillation term.}}
    \setlength{\tabcolsep}{2.2mm}
    \begin{tabular}{c|cccccc}
    \hline
         $\alpha$ & - & 0.1 & 0.2 & 0.5 & \underline{1.0} & 2.0 \\ \hline
         FID $\downarrow$ & 33.58 & 33.24 & 33.28 & 33.16 & \textbf{33.02} & 33.20 \\ \hline
    \end{tabular}
    \label{tab:ablation_weight_factor}
\end{table}

\textbf{Ablation Study on Distillation Weight Factor.} In Tab.~\ref{tab:ablation_weight_factor}, we study different distillation weight factor $\alpha$ used in Eq.~\eqref{eq:total_loss}. To verify the weight robustness in MTRD, we set different $\alpha$ value to compare the FID score. It can be seen that all different $\alpha$ values have performance improvements compared to not distilling, which proves the robustness of our distillation method to the weight factor. Among them, $\alpha=1.0$ achieved the best FID value. Therefore, in our experiment, we chose $\alpha=1.0$ as our basic setting.

\begin{figure}[h]
    \centering
    \subfloat[][pixel queue size $L$]{
        \includegraphics[width=0.47\linewidth]{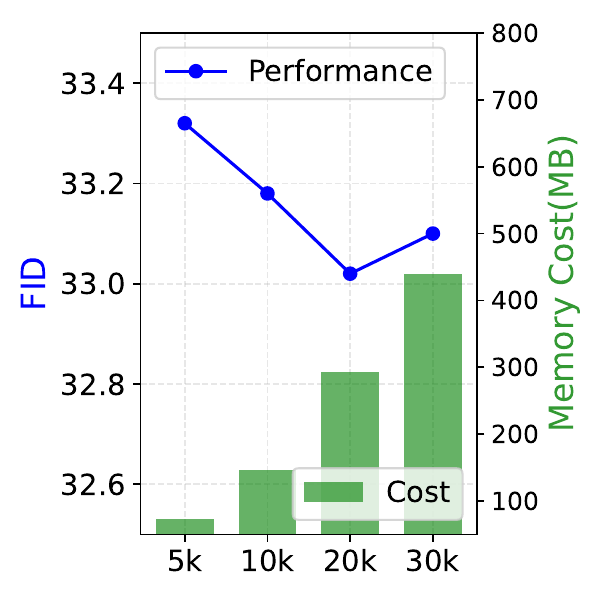}
        \label{fig:queue_size}
    }
    \subfloat[][queue sample size $k$]{
        \includegraphics[width=0.47\linewidth]{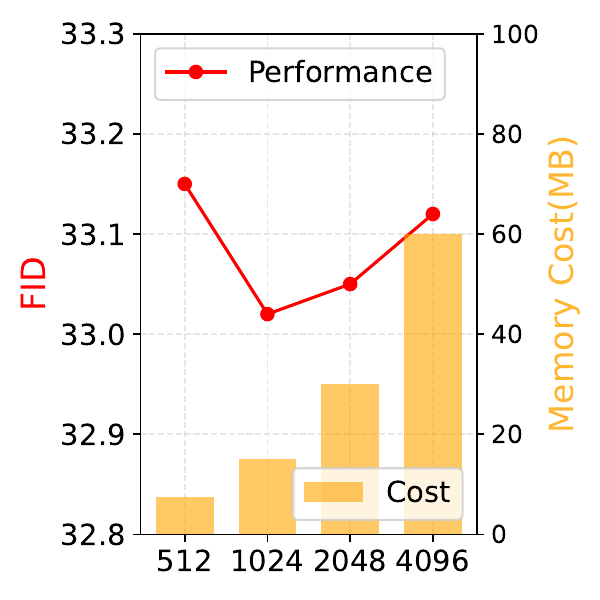}
        \label{fig:queue_sample}
    }
    \caption{Ablation study on memory-based online queue hyper-parameters. We conduct experiments on LDM-4 ImageNet 256$\times$256 model under W2A4 quantization setting.}
    \label{fig:ablation_queue}
\end{figure}

\textbf{Ablation Study on Memory-based Online Queue.} 
We present ablation study on the memory-based queue used in MTRD. We first investigate the queue size $L$ in Eq.~\eqref{eq:queue_size} and present the results in Fig.~\ref{fig:queue_size}. It can be seen that larger queue size increases the richness of features and improves performance, but reaches saturation after $L=20k$. Therefore, we chose $L=20k$ as our experimental setup. This online queue will only bring an additional 293MB of memory, which is memory-friendly. We then investigate the pixel sampling size $k$ in Eq.~\eqref{eq:queue_sample_size} and present the results in Fig.~\ref{fig:queue_sample}. It can be seen that the performance of the model improves with increasing sampling number when the sampling number is small, but gradually saturates after the optimal sampling number $k=1024$. Therefore, we use the optimal sampling number $k=1024$ in the experiment. And it is worth mentioning that this sampling strategy will only bring an additional 15MB of training memory, which is minor compared to the model size.

\begin{table}[h]
    \centering
    \caption{Inference efficiency comparison of calibration process on LDM-4 ImageNet 256$\times$256 model.}
    \setlength{\tabcolsep}{1.8mm}
    \begin{tabular}{l|cccc}
    \hline
         Method & \makecell{Time Cost \\ (hours)} & \makecell{GPU Memory \\(GB)} & FID $\downarrow$ & \makecell{Precision $\uparrow$ \\ (\%)} \\ \hline
         PTQD~\cite{he2024ptqd} & 2.98 & 15.6 & 336.57 & 0.01 \\
         QuEST~\cite{wang2024quest} & 3.05 & 20.3 & 285.42 & 0.03 \\
         EfficientDM~\cite{he2023efficientdm} & 3.02 & 19.4 & 64.45 & 36.63 \\
         MPQ-DM~\cite{feng2025mpqdm} & 3.12 & 19.7 & 36.59 & 52.14 \\
         \rowcolor[gray]{0.9}
         \textbf{MPQ-DMv2} & 3.13 & 19.8 & \textbf{32.55} & \textbf{54.40} \\ \hline
    \end{tabular}
    \label{tab:train_cost}
\end{table}

\begin{table}[h]
    \centering
    \caption{Efficiency evaluation on LDM-4 ImageNet 256$\times$256.}
    \begin{tabular}{l|cccc}
    \hline
         Method & \makecell{Bit \\ (W/A)} & Bops (T) & \makecell{Size (MB)} & Runtime (ms) \\ \hline
         FP~\cite{rombach2022ldm} & 32/32 & 102.3 & 1529.7 & 436.8 \\
         \rowcolor[gray]{0.9}
         \textbf{MPQ-DMv2} & 3/6 & \textbf{1.8} & \textbf{144.6 (10.58$\times$)} & \textbf{214.4 (2.04$\times$)}  \\
         \rowcolor[gray]{0.9}
         \textbf{MPQ-DMv2} & 2/4 & \textbf{0.8} & \textbf{96.8 (15.80$\times$)} & \textbf{130.4 (3.35$\times$)} \\ \hline
    \end{tabular}
    \label{tab:inference_cost}
\end{table}

\textbf{Ablation Study on Efficiency.}
We reported the efficiency of our MPQ-DMv2 in the calibration and inference process. We first present the calibration efficiency evaluation results in Tab.~\ref{tab:train_cost}. Compared with existing methods, MPQ-DMv2 significantly improves the quantization performance with minimal calibration burden. Especially compared with the original method MPQ-DM, MPQ-DMv2 further improves the model performance with almost no additional cost. We then present the inference efficiency in Tab.~\ref{tab:inference_cost}. We refer to the implementation of previous works~\cite{he2023efficientdm, he2024ptqd} to report the evaluation results. Compared with the FP model, the low-bit quantized MPQ-DMv2 significantly reduces computational complexity and model storage, proving the enormous potential of low-bit quantization for diffusion model inference efficiency.

\section{Conclusion}
In this paper, we proposed MPQ-DMv2, an improved mixed-precision quantization method for extremely low-bit diffusion models. To address 
the unfriendly quantization characteristics of the uniform quantizer for imbalanced outlier distribution. We propose Flexible Z-Order Residual Mixed Quantization to adjust quant steps for better representation ability. For the temporal inconsistency caused by separate quant parameters, we propose Memory-based Temporal Relation Distillation to use online time-aware pixel queue to construct temporal distribution for consistency distillation. To tackle the cold-start low-rank adaptation, we propose Object-Oriented Low-Rank Initialization to use prior quantization error for informative init. Our extensive experiments  demonstrated the superiority of MPQ-DMv2 over current SOTA methods on various model architectures and generation tasks.

\bibliography{aaai25}
\bibliographystyle{IEEEtran}




\end{document}